\newtheorem{prop}{Proposition}
\newcommand{\fdl}{federated learning\ }
\newcommand{\mini}{\textit{mini}ImageNet}
\newcommand{\etal}{\textit{et al.}}
\newcommand{\Ttsupsubeq}[2]{[\Theta^{#1}_{#2},\theta^{#1}_{#2}]}
\newcommand{\wsupsubeq}[2]{w^{#1}_{#2}}
\newcommand{\gradsubeq}[1]{\nabla_{#1}}
\newcommand{\Tcaleq}{\mathcal{T}}
  \providecommand\BibTeX{{%
    \normalfont B\kern-0.5em{\scshape i\kern-0.25em b}\kern-0.8em\TeX}}}
\begin{document}

%%
%% The "title" command has an optional parameter,
%% allowing the author to define a "short title" to be used in page headers.
\title{Federated Few-Shot Learning with Adversarial Learning}

%%
%% The "author" command and its associated commands are used to define
%% the authors and their affiliations.
%% Of note is the shared affiliation of the first two authors, and the
%% "authornote" and "authornotemark" commands
%% used to denote shared contribution to the research.
\author{Chenyou Fan}
\email{fanchenyou@cuhk.edu.cn}
\affiliation{%
  \institution{Shenzhen Institute of Artificial Intelligence and Robotics for Society}
  \country{China}
}

\author{Jianwei Huang}
\email{jianweihuang@cuhk.edu.cn}
\affiliation{%
  \institution{The Chinese University of Hong Kong, Shenzhen}
  \country{China}
}

%%
%% By default, the full list of authors will be used in the page
%% headers. Often, this list is too long, and will overlap
%% other information printed in the page headers. This command allows
%% the author to define a more concise list
%% of authors' names for this purpose.
%\renewcommand{\shortauthors}{Trovato and Tobin, et al.}

%%
%% The abstract is a short summary of the work to be presented in the
%% article.
\begin{abstract}
We are interested in developing a unified machine learning model over many mobile devices for practical learning tasks, where each device only has very few training data. This is a commonly encountered situation in mobile computing scenarios, where data is scarce and distributed while the tasks are distinct. In this paper, we propose a federated few-shot learning (FedFSL) framework to learn a few-shot classification model that can classify unseen data classes with only a few labeled samples. With the federated learning strategy, FedFSL can utilize many data sources while keeping data privacy and communication efficiency. There are two technical challenges: 1) directly using the existing federated learning approach may lead to misaligned decision boundaries produced by client models, and 2) constraining the decision boundaries to be similar over clients would overfit to training tasks but not adapt well to unseen tasks. To address these issues, we propose to regularize local updates by minimizing the divergence of client models. We also formulate the training in an adversarial fashion and optimize the client models to produce a discriminative feature space that can better represent unseen data samples. We demonstrate the intuitions and conduct experiments to show our approaches outperform baselines by more than 10\% in learning vision tasks and 5\% in language tasks.
\end{abstract}

%%
%% The code below is generated by the tool at http://dl.acm.org/ccs.cfm.
%% Please copy and paste the code instead of the example below.
%%
\begin{CCSXML}
<ccs2012>
   <concept>
       <concept_id>10010147.10010178</concept_id>
       <concept_desc>Computing methodologies~Artificial intelligence</concept_desc>
       <concept_significance>500</concept_significance>
       </concept>
   <concept>
       <concept_id>10010147.10010178.10010219</concept_id>
       <concept_desc>Computing methodologies~Distributed artificial intelligence</concept_desc>
       <concept_significance>500</concept_significance>
       </concept>
   <concept>
       <concept_id>10010147.10010178.10010224</concept_id>
       <concept_desc>Computing methodologies~Computer vision</concept_desc>
       <concept_significance>300</concept_significance>
       </concept>
 </ccs2012>
\end{CCSXML}

\ccsdesc[500]{Computing methodologies~Artificial intelligence}
\ccsdesc[500]{Computing methodologies~Distributed artificial intelligence}
\ccsdesc[300]{Computing methodologies~Computer vision}

%%
%% Keywords. The author(s) should pick words that accurately describe
%% the work being presented. Separate the keywords with commas.
\keywords{federated learning, few-shot learning, adversarial optimization}

%%
%% This command processes the author and affiliation and title
%% information and builds the first part of the formatted document.
\maketitle

\section{Introduction}

\begin{figure}
\begin{center}
\includegraphics[clip, trim=0 0 10 10, width=0.42\textwidth]{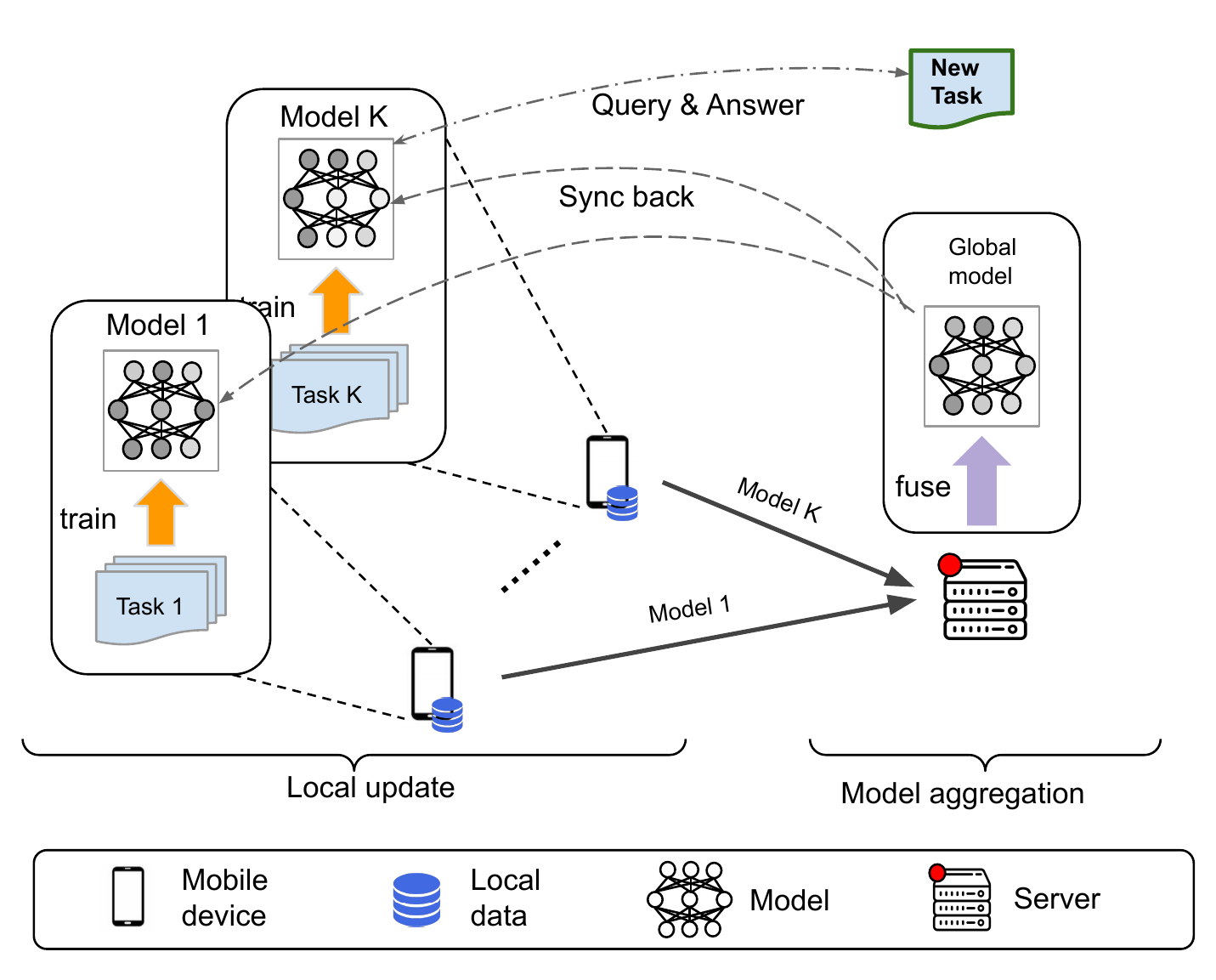}
\end{center}
\vspace{-10pt}
\caption{Overview of a FedFSL system. Distributed client devices train client models with sampled FSL tasks from local data. Then client models are sent to the central server and aggregated to a central model, which is sent back to clients for the next round of local updates.
}\label{fig:fl_fsl_demo}
\vspace{-10pt}
\end{figure}

In recent years, mobile devices such as smartphones and tablet computers have become perhaps the most frequent and convenient way of connecting users to the Internet. A capable mobile device can now get access to first-hand user data such as photos, voice, typing records, and so on, thus it becomes an ideal platform to deploy machine learning models to better understand user intents and assist in completing daily tasks.
% e.g., a pre-trained language model can help users auto-complete a phrase when replying a short message or an Email.

%However, training models on mobile devices is a challenging task. 
Conventional distributed machine learning approaches~\cite{li2014scaling, zhang2015deep} require the data to be transferred from clients to a central server, which raises serious concerns of data privacy.
A recent proposed approach to address this issue is \fdl (FedL)~\cite{fedavg,zhao2018federated,li2020federated}. In the FedL paradigm, each participating client computes a local machine learning model with its own data, while a central server periodically coordinates client models by model aggregation (without collecting the actual data). FedL provides a promising way of learning models on distributed devices while preserving data privacy and locality.

However, existing FedL approaches assume each participating client has sufficient training data for the tasks of interest. The realistic situation is that data collected with mobile devices can be insufficient and almost always unlabeled. This severely limits the practicality and scalability of FedL in realistic applications.
For example, image classification with federated deep neural networks (DNNs) is a commonly compared task in recent FedL studies~\cite{fedavg,zhao2018federated,li2020federated}. Training such a DNN in supervised fashion requires hundreds of labeled training samples or more for every class. In reality, each mobile user may own just one or a few samples of each interested image category, and the mobile user often does not have time or interest to label these images. The huge gap between lab scenarios with abundant labeled data and real situations with scarce and mostly unlabeled data motivates us to consider the following question: \textbf{How to make FedL effective in data-scarce scenarios?}

\begin{table*}[htb]
\centering
\small
\begin{tabular}{l|ccccccc} \toprule  
\hspace{-5pt}
Method & Federated learning & Non-convex model & SGD solver  & Few-shot learning & Divergence &  Feature learning \\ \toprule
FedAvg\cite{fedavg} / FedProx\cite{li2018federated}   & \checkmark   & \checkmark & \checkmark         \\
FEDL~\cite{dinh2019federated}  & \checkmark  & & \checkmark  \\
% One-shot FL~\cite{guha2019one} & \checkmark & & &  \\ 
% Distillation~\cite{hinton2015distilling} &&\checkmark &\checkmark & & \checkmark \\
Mutual learning~\cite{zhang2018mutual}  & &\checkmark &\checkmark & &\checkmark                \\
Local adaptation~\cite{yu2020salvaging}  &\checkmark &\checkmark &\checkmark & &\checkmark    \\
Gradient-based FSL\cite{finn2017model} & &\checkmark &\checkmark & \checkmark    \\  
Metric-based  FSL~\cite{Gidaris_2018_CVPR} & &\checkmark &\checkmark & \checkmark &   & \checkmark\\ 
\midrule 
FedFSL-naive (\cite{chen2018fedmeta}, \ref{sec:fed_maml})    &\checkmark &\checkmark &\checkmark & \checkmark    \\
\textbf{FedFSL-MI} (ours, \ref{sec:fed_maml_mi})  &\textbf{\checkmark} &\textbf{\checkmark} &\textbf{\checkmark} &\textbf{\checkmark} & \textbf{\checkmark} &       \\ 
\textbf{FedFSL-MI-Adv} (ours, \ref{sec:adv})  &\textbf{\checkmark} &\textbf{\checkmark} &\textbf{\checkmark} &\textbf{\checkmark} & \textbf{\checkmark} & \textbf{\checkmark}    \\ \bottomrule
\end{tabular}
\vspace{1pt}
\caption{Comparisons of relevant approaches with supported features.}
\vspace{-15pt}
\label{tab:checklist}
\end{table*}

A recently developed approach to address the issue of insufficient training data is few-shot learning (FSL)~\cite{vinyals2016matching, finn2017model, li2017meta}. FSL aims to develop machine learning models to solve unseen tasks with very few labeled training data, but often in the context of a single data source.
In this paper, we propose a federated few-shot learning (FedFSL) framework for efficiently training few-shot classification models from many data sources. The few-shot model can tackle novel tasks with just a few labeled data for unseen classes. As shown in Fig.~\ref{fig:fl_fsl_demo}, the paradigm of FedFSL is to first perform local updates with few-shot tasks sampled from local data, then send local models to a central server for model aggregation and coordination.
Through inheriting the merits of FedL, FedFSL also preserves the data privacy and communication efficiency.

FedFSL has many potential applications for utilizing machine learning models on mobile devices. For example, a few-shot language model can be used to suggest words by learning from just a few typing records from each of many users; a few-shot sentiment analysis model can be used to depict user profiles given only a few tweets posted by each of the many users; a few-shot face recognition model can identify users and their friends by learning from just few annotated photos by each of many mobile users.
%media recommendation model can be used to suggest media contents such as movies and songs
%by learning from only a few browsing records of many users.

There are two technical challenges we encounter during the development of the FedFSL framework: 1) directly using the existing FedL approaches to the data-scarce scenarios may lead to misaligned decision boundaries produced by client models, and 2) constraining the decision boundaries to be similar over clients would develop a classifier overfit to training tasks but not transferable to unseen tasks. To address these issues, we propose to regularize local updates by minimizing the divergence between client models and the central model. We then formulate the training of the feature generator part and classifier part of the model in adversarial fashion. In this way, the client models are explicitly optimized to produce a discriminative feature space that can better represent unseen data samples. We demonstrate the intuitions and conduct experiments to show the effectiveness of our approaches.

Our contributions can be summarized as follows:
\begin{itemize}[leftmargin=*]
    \item We propose a novel federated few-shot learning framework (FedFSL) that can perform effective federated learning on few-shot tasks. This represents the first step in addressing the commonly encountered but overlooked scenarios in mobile computing where training data is scarce and testing tasks are distinct.
    \item We present the key innovations in formulating FSL with federated clients as well as explicitly optimizing the federated model by minimizing model discrepancy in challenging non-IID scenarios.
    \item We define a novel concept of mutual divergence of federated client models, which can be minimized to better coordinate the client training on scarce local data. 
    \item We design a dedicated adversarial learning approach to construct a discriminative feature space, which better generalizes to unseen tasks compared with existing training procedures of FSL models. 
     \item We evaluate our framework by modelling different types of structured data (such as images and sentences) with both CNN and RNN models, showing its effectiveness and practical usability in modelling various learning tasks in machine vision and NLP.
    \item Our approaches significantly outperform baselines that are either non-distributed or not aligning the feature space across the clients by more than 10\% on vision tasks and 5\% on language tasks.
    % by 10\%$\sim$20\% on benchmark datasets. Also, our model coordination technique boosts the performance by 5\%$\sim$10\%.
    %\tred{result} 
\end{itemize}

\section{Related work}

We will briefly review recent related work 
in two categories: (i) studies
either \fdl (FedL) (e.g.,~\cite{fedavg,zhao2018federated,li2018federated,li2020federated,dinh2019federated,zhang2020fedpart}) or few-shot learning (FSL) (e.g.,~\cite{vinyals2016matching, snell2017prototypical,munkhdalai2017meta,li2017meta,sun2019mtl,Gidaris_2018_CVPR, sung2018learning, Li_2020_adap_margin}), or both of them~\cite{chen2018fedmeta} (ii) studies proposing similar ideas of minimizing model divergence to better learn individual models or an ensemble model.

% In this section, we will briefly review the related work that either relates to both \fdl (FedL) has become a rapidly developing topic in the research community~\cite{fedavg,zhao2018federated,li2018federated,li2020federated,dinh2019federated,zhang2020fedpart}, as it provides a new way of learning models over a collection of distributed mobile devices while still preserving data privacy. 

% Few-shot learning (FSL)~\cite{vinyals2016matching, snell2017prototypical} is another hot topic in machine learning field
% which learns to recognize novel classes with very few labeled training samples. Recent state-of-the-art approaches either focus on learning a distance metric that can distinguish categories ~\cite{Gidaris_2018_CVPR, sung2018learning, Li_2020_adap_margin} or 
% searching for a good initialization of the model parameters 
% which can adapt to novel classification tasks with only a few gradient update steps~\cite{finn2017model,munkhdalai2017meta,li2017meta,sun2019mtl}.
% focus on learning a good metric in feature space such that data samples of same classes have lower distances while samples of different classes have larger distances.

% Li~\etal~\cite{li2020diff} proposed a differentially private algorithm for securing parameter transfer across devices or learning stages. Though they consider FL and FSL as two separated applications of their technique, their goal is to secure data privacy during model sharing instead of performing FSL with federated devices.

To our best knowledge, training FSL models on distributed devices is still an under-explored open problem. 
The first work of this topic was from
Chen~\etal~\cite{chen2018fedmeta} who explored federated meta-learning by applying FedAvg on meta-learning approaches such as MAML~\cite{finn2017model} in a straightforward way. However, their goal is to improve supervised learning by better sharing models among
%which can learn on unseen tasks (i.e., transferable knowledge) in
federated clients, instead of learning few-shot tasks.
% Hereby, they evaluated their models with conventional supervised learning settings instead of few-shot setting, i.e, they
%partitioned data by different users while trained and evaluated the models on different partitions. 
They neither evaluated their models on FSL tasks, nor explicitly considered dealing with the underlying data heterogeneity in different devices (e.g., non-IID case) which can severely harm FSL.
Our model captures the idea of federating transferable knowledge
among distributed clients. We further explore the practical data-scarce scenarios and evaluate our models on challenging benchmark FSL datasets.
In addition, we explicitly resolve the data heterogeneity issue by proposing a family of more effective meta-learning approaches designed for federated settings.

The other work that loosely connects FSL with FedL
is Li~\etal~\cite{li2020diff}, which proposed a differentially private algorithm for securing parameter transfer across devices or learning stages. The authors considered FedL and FSL as two separated applications of their technique, and their goal is to secure data privacy during model sharing instead of performing FSL with federated devices.

% Directly applying existing FL approaches (e.g., FedAvg~\cite{fedavg}) on FSL would be less capable as FSL procedures depend heavily on locally sampled tasks which could be distinct due to data heterogeneity. To coordinate the training of client models, we propose to minimize the divergence of predictions between each client and the central model. 

There are two recent studies relating to the techniques that we utilize to coordinate client model training and to learn the consistent feature space, though neither of them considered few-shot learning.
Yu~\etal~\cite{yu2020salvaging} used knowledge distillation~\cite{hinton2015distilling} (a variant of KL-divergence) in federated learning to learn different client models to better fit local data. 
%Their goal was to use a teacher model to better train student models. 
Our approach is different from theirs such that 1) they did not perform model aggregation as they want to train client models that fit on local data, but our method performs aggregation for learning a unified global model, 2) their method followed the normal supervised learning paradigm which cannot solve few-shot tasks, and 3) they did not consider feature space learning but our method does.
Zhang~\etal~\cite{zhang2018mutual} proposed to minimize the KL-divergence of every client model pair to enhance ensemble learning. However, this imposes heavy computation and communication costs in distributed scenarios. On the contrary, we propose to approximate the client pairwise KL-divergence with the divergence between the client model and the federated global model, which integrates into FedL seamlessly.

It's also worth to clarify that several recent studies~\cite{guha2019one, salehkaleybar2019one, shin2020xor} 
focused on reducing the communications of distributed learning with one or a few communication rounds under federated learning settings. Though this topic is also referred as ``few-shot federated learning", it's totally different from our work which studies federated learning under data-scarce scenarios.

\section{Federated Few-shot Learning}
\label{sec:approach}

In this section, we formulate the Federated Few-Shot Learning (FedFSL) framework in details. Specifically, we will study the few-shot classification task which learns to classify on novel classes with few training samples.
We will first review the general federated learning (FedL) objective in~\ref{sec:fl} and the general centralized few-shot learning (FSL) procedures in~\ref{sec:fsl} respectively, based on which we propose the FedFSL formally in
\ref{sec:fed_maml}.

% We also have a novel class set $C_n$ consisting of $n_n$ new classes with non-overlapping with base classes, and each class only has a few labeled data points. Fed-FSL aims to learn a good classifier for novel classes.
% We define a few-shot learning task with its conventional terminology: a N-way P-shot classification task will sample K labeled data from each of N classes which gives a total of $N\cdot K$ samples to perform N-way classification.
% We show the frequently used notations in Table~\ref{tab:freq_notation} for reference.
 
\subsection{General Federated Learning Objective}
\label{sec:fl}
First, we briefly review federated learning (FedL) paradigm and its common implementations in this subsection. 
We consider a distributed system of $K$ clients, each owning a local data source. In FedL, each client trains a local machine learning model based on its local data, while a central server coordinates the clients periodically by collecting their parameters, aggregating them into a central model and sending its parameter back to all clients.
% Federated learning (FL) is a new distributed machine learning paradigm which preserves data at local sources while exchanges models weights with communications between clients and the server for coordination. In particular, FedAvg~\cite{fedavg}, the first and perhaps the most concise form of FL, 
% proposed to update each client model in parallel at first, then aggregate client models by weight averaging to a central model, at last push back the central model to each client for coordination. 
This process will repeat for multiple communication rounds until convergence or timeout.  

Formally, let 
$n_k$ be number of data samples of client $k$, $n=\sum_k n_k$ be total samples across the devices, $w$ be the learning model.
We consider a local objective for client $k$ as the average loss over all data samples
\begin{equation}
\small
\mathcal{L}_k(w)=\frac{1}{n_k} \sum_{i=1}^{n_k} f(x_i,y_i;w),
\label{eq:fed_loss}
\end{equation}
in which $f$ is a loss function that evaluates the prediction of model $w$ on a data sample $(x_i, y_i)$. 
The type of loss function $f$ depends on the task and is known by all clients.
For example, in a classification task with deep neural networks (DNNs), $f$ is often chosen as the cross-entropy loss applied on the models' probabilistic outputs.

The global target is a weighted average of local objectives
\begin{equation}
\small
\underset{w}{\text{min}} \ \mathcal{L}(w) = \sum_{k=1}^{K} p_k \mathcal{L}_k(w),
\label{eq:fed_def}
\end{equation}
in which $p_k = n_k/n$. 
However, as direct data exchange is prohibited in FedL, directly optimizing the global objective  \eqref{eq:fed_def} needs to perform a 
full batch gradient descent on all data that each client holds, and perform 
model aggregation after each client update. This requires high memory usage (if the client model is large) and excessive communications for exchanging models between clients and the server.
A common practice to resolve this issue is to approximate the global objective, such as in the
FedAvg~\cite{fedavg} algorithm which optimizes each local objective $\mathcal{L}_k$ individually and in the FedProx~\cite{li2018federated} which solves the local objectives with proximal terms to regularize training.

% Two widely used approximations are as follows:
% \begin{itemize}
% \item FedAvg~\cite{fedavg} proposes to optimize each local objectives $\mathcal{L}_k$ individually with local data to obtain local optimal weights $w_k$. Then a central server aggregates the client models by averaging to approximate the global solution 
% \begin{equation}
% w = \sum_{k=1}^K p_k w_k
% \end{equation}
% After that the central model is sent back to clients to complete a communication round.
% \item FedProx~\cite{li2018federated} is a variant of FedAvg by adding a proximal term in local objective such that 
% \begin{equation}
% \mathcal{L}^{prox}_k(w)=\mathcal{L}_k(w) + \mu ||w-\bar{w}||^2
% \label{eq:fed_prox}
% \end{equation}
% %$\underset{w}{\text{min}} \ \mathcal{L}(w) + \mu ||w-w^t||^2$ 
% in which $\bar{w}$ is the synchronized global model. The proximal term aims to regularize local updates to be closer to the global model to avoid divergence.
% \end{itemize}

Existing FedL approaches often assume that the clients
always hold sufficient training data for a same task, e.g., all the clients should own enough data samples of the same categories in a classification task.
However, the realistic situation is each client may own a few labeled data samples for certain categories for training, and may encounter unlabeled data samples for testing with unseen true categories. This leads us to study the few-shot classification task which learns to classify on novel classes with few training samples in the following sections.

% For example, 
% many recent FedL studies~\cite{fedavg, zhao2018federated} compared on the image classification task 
% the classes of images that can be classified are limited by the dataset and each class is assumed to have enough training samples. 
% How to recognize unseen data classes with few labeled training data is still an open problem.

\subsection{General Centralized Few-Shot Learning}
\label{sec:fsl}
Next, we briefly review centralized few-shot learning (FSL) procedures. FSL aims to learn a generic model which can adapt to 
unseen tasks with only a few labeled training samples. In this paper, we study the few-shot classification task which aims to classify novel classes.  We define an $N$-way $P$-shot $Q$-query FSL task as a task of training a model with $P$ labeled images for each of $N$ classes and then
evaluating the model with $Q$ unlabeled query images for each class. $P$ is typically very small such as 1 or 5 as ``few-shot" implies.

Let us consider a toy example of classifying animal pictures. The training data are images of cats and dogs which we call \textit{base classes}. The testing data are images of tigers and wolves which we call \textit{novel classes}; each novel class owns one labeled image and many unlabeled images.
We wish to develop a model trained with base classes (cat and dog) that can predict on tiger and wolf (2-way) samples by observing each category just one labeled image (1-shot), i.e., a 2-way 1-shot FSL task.

\begin{figure}
\begin{center}
\includegraphics[clip, trim=0 0 0 0, width=0.46\textwidth]{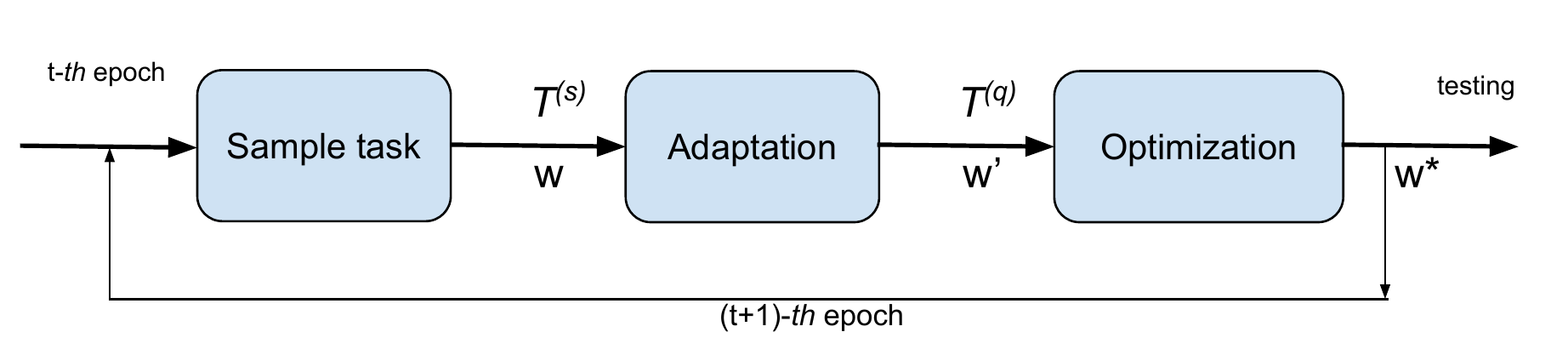}
\end{center}
\vspace{-10pt}
\caption{Three-step meta-learning of FSL. 
}\label{fig:fsl_1}
\vspace{-12pt}
\end{figure}

To train a capable FSL model, recent state-of-the-art work~\cite{Gidaris_2018_CVPR, sung2018learning, finn2017model, li2017meta} adopted a training strategy called meta-learning which samples various few-shot tasks from training data and optimizes the model to fast adapt to these new tasks. The key idea of meta-learning is to learn some transferable knowledge from few data samples in training data that can apply on unseen data. 
State-of-the-art gradient-based approaches~\cite{finn2017model,sun2019mtl} consider a good model initialization as transferable knowledge if it could adapt to various few-shot tasks with just a few gradient steps. They propose to explicitly search for such an initial model as follows. 

The training objective is to minimize the training loss over a batch of tasks $\Tcaleq \in \mathcal{B}$ as follows
\begin{equation}
\begin{split}
w^* &= \underset{w}{\text{min}} \ \mathcal{L}(w) = \frac{1}{|\mathcal{B}|} \sum_{\Tcaleq \in \mathcal{B}} \ell_\mathcal{T}(w), \\
\end{split}
\label{eq:fed_def_adp}
\end{equation}
in which $w^*$ is the optimized model that trains to fast adapt to new tasks and $\ell$ is a task loss. This can be tackled with an iterative approach in which each iteration can be decomposed into three steps, as shown in Fig.~\ref{fig:fsl_1}.
\begin{itemize}[leftmargin=*]
\item \textbf{Sampling step:} The first step is to sample a few-shot task 
$\mathcal{T}$,  also called an \textit{episode}, from base classes. 
For an $N$-way $P$-shot $Q$-query few-shot task, an episode consists of $P$ data instances sampled from each of $N$ distinct base classes as a support set $\mathcal{T}^{(s)}$, and $Q$ data instances sampled from the same $N$ classes as a query set $\mathcal{T}^{(q)}$, which gives a total of $(P+Q) \cdot N$ instances.
\item \textbf{Adaptation step:} The second step is to adapt the current model to the sampled task with gradient descents. This step uses the few labeled data in the support set  $\mathcal{T}^{(s)}$ and performs one or several gradient steps towards optimizing the model weights to the sampled task such that
\begin{equation}\label{eq:adp}
\begin{split}
\wsupsubeq{'}{} = \wsupsubeq{}{}-\alpha \gradsubeq{w}f_{\mathcal{T}^{(s)}} (w),
% \wsupsubeq{'}{k} &= w-\alpha \gradsubeq{w} \mathcal{L}_k
% (w) \\
% &= w-\alpha \gradsubeq{w} \frac{1}{|\mathcal{B}_k|} \sum_{\mathcal{T}_k} f_{\mathcal{T}^{(s)}_k}(w)
\end{split}
\end{equation}
in which $w'$ is the adapted model, $\alpha$ is the step size. %$\gradsubeq{w}$ is the gradient of the loss function. 
% Eq.~\ref{adp} alters the initial parameters in the direction that best suited to the given task.
\item \textbf{Optimization step:} The final step is to evaluate $w'$ with more samples in the query set $\mathcal{T}^{(q)}$ with the empirical loss function
\begin{equation}
\begin{split}
\ell_\mathcal{T}(w) & =  f_{\Tcaleq^{(q)}}(w')  = f_{\Tcaleq^{(q)}}(w-\alpha \nabla f_{\mathcal{T}^{(s)}}(w)), \\
\end{split}
\label{eq:fed_def_adp_l1}
\end{equation}
which can be 
solved by another gradient descent
\begin{equation}
\begin{split}
% \mathcal{L}(w) & = \frac{1}{|\mathcal{B}|} \sum_{\Tcaleq} \ell_\mathcal{T}(w) \\
% w^* &= \underset{w}{\text{min}} \ \mathcal{L}(w)
w \leftarrow w -\beta \ \gradsubeq{w} \ell_\mathcal{T}(w),
\end{split}
\label{eq:fsl_sgd}
\end{equation}
in which $\beta$ is the learning rate.
\end{itemize}
The above procedures are summarized in Fig.~\ref{fig:fsl_1}.
For a centralized FSL to converge, the sampling-adaptation-optimization procedures are repeated for many iterations. This will produce the optimal parameter that best adapts to few-shot tasks. 

During inference, a learned FSL model firstly adapts to unseen tasks with a few labeled samples with \eqref{eq:adp}, then predicts the labels on query samples. In our toy example mentioned above, the model is trained with cat and dog samples to discern two species, then is used to classify tiger and wolf samples with its learned capacity of distinguishing patterns. 
% . Similar to training stage, each task is composed of a support set which is used to adapt the model to new task, as well as a query set to evaluate the model with standard metrics such as classification accuracy and cross-entropy loss.

% Task distributions over client data are defined as $p(\mathcal{T}^1),\dots,p(\mathcal{T}^K)$.
\begin{figure}
\begin{center}
\includegraphics[clip, trim=0 0 0 0, width=0.4\textwidth]{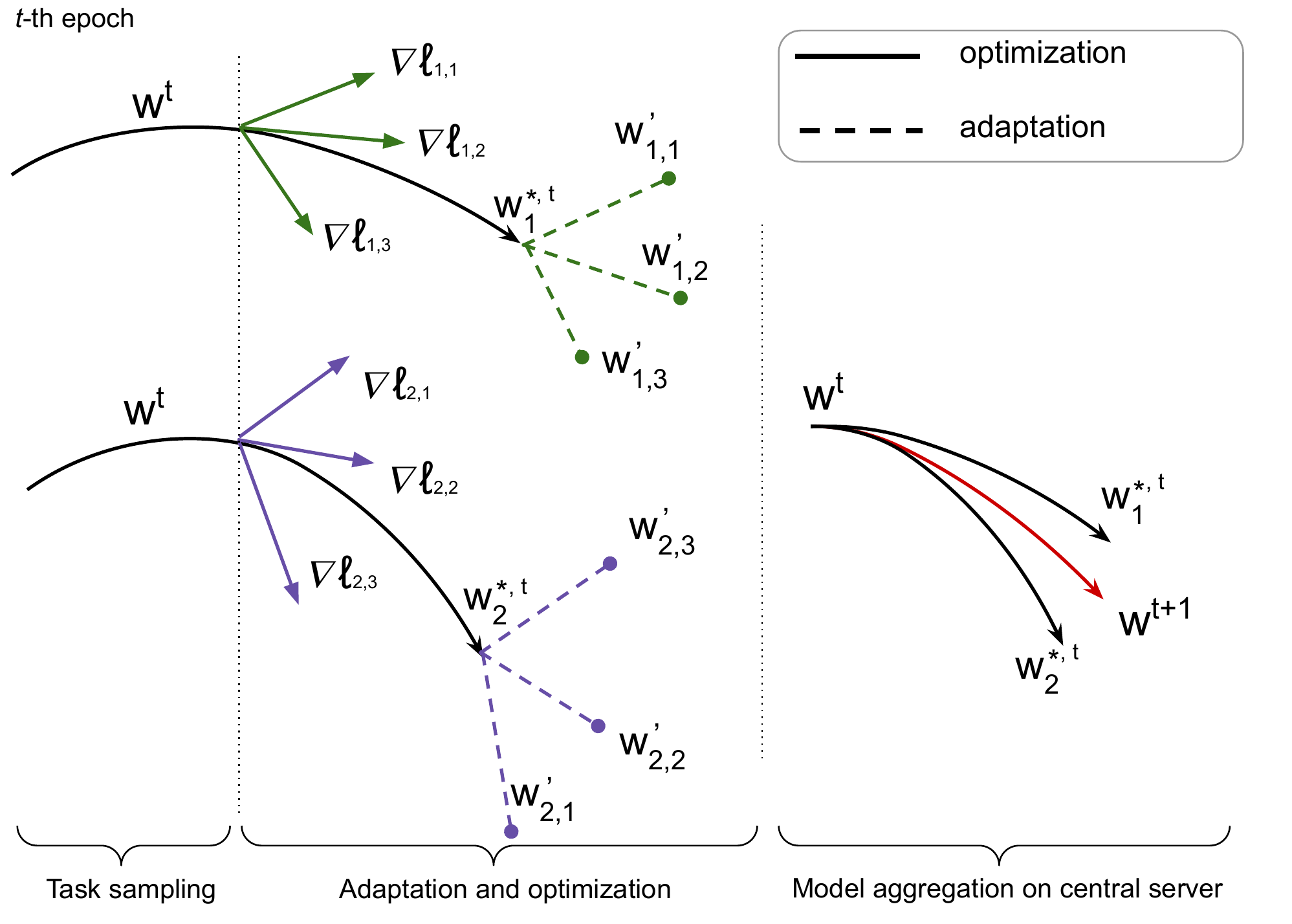}
\end{center}
\vspace{-10pt}
\caption{Demo of a two-client case of FedFSL based on meta-learning procedure.
}\label{fig:fed_fsl_2_client}
\vspace{-10pt}
\end{figure}

\subsection{Federated Few-shot Learning (FedFSL)}
\label{sec:fed_maml}
% In this section, we will 
% We wish to develop a few-shot learning model on mobile devices for facilitating practical applications with machine learning techniques.
% In the meantime, it's also important to keep data privacy and communication efficiency.
As our purpose is to facilitate distributed devices to learn models for few-shot tasks,  we need study how to design such a framework in which meta-learning procedures can be integrated in the federated learning.
We propose Federated Few-shot Learning (FedFSL) in this section. The goal of FedFSL is to search for a \textit{global} optimal model $w^*$ learned on distributed data sources that can best perform few-shot tasks.

Suppose we have $K$ participating clients and each of them can sample batches of few-shot tasks $\mathcal{T}_k \in \mathcal{B}_k$ from their local data sources as discussed in previous section. We first define $k$-th client's local FSL objective by extending from \eqref{eq:fed_def_adp}
\begin{equation}
\begin{split}
\mathcal{L}_k(w) & = \frac{1}{|\mathcal{B}_k|} \sum_{\Tcaleq_k \in \mathcal{B}_k} \ell_{\mathcal{T}_k}(w), \\
\end{split}
\label{eq:fed_def_adp_k}
\end{equation}
in which the subscript $k$ of $\Tcaleq_k$ emphasizes that it's sampled from local data source of client $k$.
% with the difference that the centralized dataset is now replaced with distributed data sources.

Our goal is to find a global optimal model $w^*$ which minimizes the weighted average of local FSL objectives. We formulate this global target as an FedL problem similar to \eqref{eq:fed_def}, i.e., 
\begin{equation}
\begin{split}
w^* &= \underset{w}{\text{min}} \ \mathcal{L}(w) = \sum_{k=1}^{K} p_k \mathcal{L}_k(w) = \sum_{k=1}^{K} \frac{|\mathcal{B}_k|}{|\mathcal{B}|} 
 \mathcal{L}_k(w) \\
\end{split}
\label{eq:fed_def_all_0}
\end{equation}
% \begin{equation}
% \begin{split}
% \underset{w}{\text{min}} \ \mathcal{L}(w) &= \sum_{k=1}^{N} p_k \mathcal{L}_k(w) \\
% & = \sum_{k=1}^{N} p_k \sum_{\Tcaleq_k} \mathcal{L}_{\Tcaleq^{(q)}_k} (f_{ \wsupsubeq{}{}-\alpha \gradsubeq{\wsupsubeq{}{}}\mathcal{L}_{\mathcal{T}^{(s)}_k} (f_{\wsupsubeq{}{}})})
% \end{split}
% \label{eq:fed_def_all_0}
% \end{equation}
Directly optimizing \eqref{eq:fed_def_all_0} would be difficult, as learning an optimal $w^*$ for all distributed clients would require excessive communications across clients, as discussed in \ref{sec:fl}. We thus provide an efficient algorithm for tackling this issue.

Motivated by FedAvg~\cite{fedavg}, we propose a straightforward way of solving a surrogate objective of \eqref{eq:fed_def_all_0} to approximate the global solution, which we call \textbf{FedFSL-naive}.
As shown in Fig.~\ref{fig:fed_fsl_2_client},
FedFSL-naive iteratively updates the central model $w$ by (i) first optimizing each local objective of \eqref{eq:fed_def_adp_k} in parallel, and (ii) aggregating local models to the central model, which update the global model and send it back to clients for the next round of optimization. Formally,
\begin{itemize}[leftmargin=*]
\item At the $t$-th optimization round, each client $k$ optimizes the following local objective
\begin{equation}
\begin{split}
w^{*,t}_k &= \underset{\wsupsubeq{}{}}{\text{argmin}} \ \mathcal{L}_k(\wsupsubeq{}{}) = \underset{\wsupsubeq{}{}}{\text{argmin}} \ \frac{1}{|\mathcal{B}_k|} \sum_{\Tcaleq_k \in \mathcal{B}_k} \ell_{\Tcaleq_k} (w), \\
\end{split}
\label{eq:fed_maml}
\end{equation}
in which the FSL loss $\ell(w)$ is given by \eqref{eq:fed_def_adp_l1}. Fig.~\ref{fig:fed_fsl_2_client} shows a two-client example, in which each client updates on three sampled tasks with \eqref{eq:fed_maml} and obtains local optimal models $w^{*,t}_1$ and $w^{*,t}_2$. The clients then send these local parameters to the central server. 
\item Then the central server approximates the optimal global solution by averaging the client models such that
\begin{equation}
\label{eq:fuse}
w^{t+1} = \sum_{k=1}^C  \frac{|\mathcal{B}_k|}{|\mathcal{B}|} \wsupsubeq{*,t}{k},
\end{equation}
which will be synchronized to all clients for next round of optimization.
\end{itemize}
The above steps \eqref{eq:fed_maml} and \eqref{eq:fuse} are repeated for multiple rounds until convergence. We summarize the procedures in Algorithm~\ref{algo:fed_maml}.

{\SetAlgoNoLine
\begin{algorithm}[htpb]
\small
\DontPrintSemicolon
\LinesNumberedHidden
\KwIn{A set of $K$ federated clients. A local FSL objective $\mathcal{L}_k$ for each client $k$.
% The total dataset $\mathcal{D}$ is the union set of all actual local data $\{D_1,\dots,D_K\}$ on $N$ clients. 
% Task distributions $p(\mathcal{T}^1),\dots,p(\mathcal{T}^K)$ of client data could be identical or not. 
% Learning rate $\alpha$, $\beta$. 
}
\KwOut{A global model $\wsupsubeq{}{}$ optimized for FSL task.}
\textbf{Server executes:} \;
\Indp Initialize global model $w^0$ \;
$t \leftarrow 1$ \;
% Randomly initialize $\wsupsubeq{}{k}$ for all clients. \;
%\For{\textup{each round} $t = 1,2,\dots,T$ } {
\While{t $\leq$ maximum rounds $T$}{
    \For{\textup{each client} $k$ \textup{\textbf{in parallel}}}{ 
    $w_k^{t} \leftarrow $ ClientUpdate{$(w^t)$} 
    }
    Clients send model parameters $w_{1...K}^{t}$ to server \;
    $w^{t+1} \leftarrow \sum_{k=1}^K \frac{|\mathcal{B}_k|}{|\mathcal{B}|} w^{t}_k$  \tcp*[l]{model avg}
    The server sends $w^{t+1}$ back to clients \;
    $t \leftarrow t+1$
 }
Return $w^t$ \;
\;
\Indm \textbf{ClientUpdate}$(w)$: \;
\Indp \KwIn{global model from previous round $w^t$}
\KwOut{updated local model $w^t_k$}
%  $\mathcal{B}_k \leftarrow$ (split local data into batches of episodes) \;
 $w \leftarrow w^t$ \;
 Sample a batch of episodes $\mathcal{B}_k=\{\mathcal{T}_1,...,\mathcal{T}_n\}$ \;
 $w^k_t \leftarrow $ Solve Eq.\eqref{eq:fed_maml} with SGD \;
 Return $w^t_k$ \;
% \For{\textup{each episode} $\mathcal{T} \in \mathcal{B}_k$ \textup{\textbf{in sequence}}}{ 
%     $w' \leftarrow \wsupsubeq{}{}-\alpha  \gradsubeq{\wsupsubeq{t}{k}}f_{\mathcal{T}^{(s)}}(w^t_k)$ \tcp*[l]{Eq.\eqref{eq:adp}}
%     % \LeftComment{// \textit{optimize local objective} $\mathcal{L}_k$} \;
%     $w^k_t \leftarrow \underset{\wsupsubeq{}{}}{\text{argmin}} \ \mathcal{L}_k(w)$  \tcp*[l]{Eq.\eqref{eq:fed_maml}}
% }
% return $w$ to server \;
\caption{FedFSL-naive framework.} \label{algo:fed_maml}
%\end{algorithm2e}
\end{algorithm}
}

\begin{prop}
\label{prop:1}
If loss function $f_{\mathcal{T}}(w)$ in \eqref{eq:adp} satisfies
the strongly-convex conditions as in Corollary 1, Finn~\etal~\cite{finn2019online}
\footnote{ $f$ is \textit{G-Lipschitz}, $\beta$-\textit{smooth}, $\rho$-\textit{Lipschitz} Hessians and $\mu$-\textit{strongly} convex}
,  Algorithm~\ref{algo:fed_maml} converges at a rate of  $\mathcal{O}(\frac{1}{T})$ in which $T$ is the total number of every device's gradient updates during training.
\end{prop}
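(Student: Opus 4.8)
The plan is to reduce Proposition~\ref{prop:1} to a standard convergence theorem for FedAvg on strongly convex, smooth objectives, after first certifying that the few-shot meta-objective inherits these regularity properties from the per-sample loss $f$. The first step is to handle the meta-objective. Each client optimizes $\ell_{\mathcal{T}}(w) = f_{\mathcal{T}^{(q)}}\bigl(w - \alpha \nabla f_{\mathcal{T}^{(s)}}(w)\bigr)$, i.e.\ a composition of $f$ with the one-step adaptation map $\Phi(w) = w - \alpha \nabla f_{\mathcal{T}^{(s)}}(w)$. Under the four assumptions quoted in the footnote ($G$-Lipschitz, $\beta$-smooth, $\rho$-Lipschitz Hessian, $\mu$-strongly convex), Corollary~1 of Finn~\etal~\cite{finn2019online} shows that for a sufficiently small step size $\alpha$ (below an explicit threshold depending on $\mu,\beta,\rho$), $\ell_{\mathcal{T}}$ is $\tilde\mu$-strongly convex and $\tilde L$-smooth, with $\tilde\mu,\tilde L$ functions of $\alpha,G,\beta,\rho,\mu$ only. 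Since each client objective $\mathcal{L}_k(w) = \frac{1}{|\mathcal{B}_k|}\sum_{\mathcal{T}_k\in\mathcal{B}_k}\ell_{\mathcal{T}_k}(w)$ is a finite average of such functions and the global objective $\mathcal{L}(w)=\sum_k p_k\mathcal{L}_k(w)$ is a convex combination of the $\mathcal{L}_k$, both $\mathcal{L}_k$ and $\mathcal{L}$ are $\tilde\mu$-strongly convex and $\tilde L$-smooth, and in particular the minimizer $w^\ast$ of \eqref{eq:fed_def_all_0} is unique.

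The second step is to control the stochastic meta-gradient used by the SGD solver inside \textbf{ClientUpdate}. There one works with sampled estimates of $\nabla \ell_{\mathcal{T}_k}(w) = \bigl(I - \alpha \nabla^2 f_{\mathcal{T}^{(s)}}(w)\bigr)\,\nabla f_{\mathcal{T}^{(q)}}\!\bigl(\Phi(w)\bigr)$, formed from support and query mini-batches. I would show (i) these estimates are unbiased up to the usual MAML bias, which is $\mathcal{O}(\alpha)$ and can be absorbed into the smoothness/variance constants, and (ii) their second moment is uniformly bounded, say by $\sigma^2 + \tilde L^2\|w-w^\ast\|^2$, using the $G$-Lipschitz bound on $\nabla f$ for the gradient factor and the $\rho$-Lipschitz bound on $\nabla^2 f$ together with $\beta$-smoothness for the Hessian-vector-product factor. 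I expect this to be the main obstacle: unlike vanilla SGD, the meta-gradient is a product of two correlated random quantities (a sampled Hessian at $w$ and a sampled gradient evaluated at the adapted point $\Phi(w)$), so bounding its bias and variance cleanly — while keeping the dependence on $\alpha$ explicit so that the small-$\alpha$ regime of Stage~1 is consistent — requires a careful triangle-inequality decomposition and the $\rho$-Lipschitz-Hessian hypothesis.

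With these two facts, Algorithm~\ref{algo:fed_maml} is exactly FedAvg applied to the $\tilde\mu$-strongly convex, $\tilde L$-smooth objective $\mathcal{L}$ with bounded-variance stochastic gradients, and the third step is the now-routine FedAvg analysis. Writing $\eta_t$ for the diminishing local step size and $E$ for the number of local SGD steps per round, one combines the per-step descent lemma with a client-drift bound of the form $\sum_k p_k\,\mathbb{E}\|w^t_k - \bar w^t\|^2 = \mathcal{O}\!\bigl(E^2\eta_t^2(\sigma^2+\Gamma)\bigr)$, where $\Gamma = \mathcal{L}^\ast - \sum_k p_k\mathcal{L}_k^\ast$ quantifies the (non-IID) heterogeneity, and the contraction induced by strong convexity, to obtain the one-round recursion
\begin{equation}
\mathbb{E}\bigl\|w^{t+1}-w^\ast\bigr\|^2 \le (1-\tilde\mu\,\eta_t)\,\mathbb{E}\bigl\|w^{t}-w^\ast\bigr\|^2 + \eta_t^2\, B,
\end{equation}
with $B$ collecting the variance, drift, and heterogeneity terms. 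Choosing $\eta_t = \Theta\!\bigl(\tfrac{1}{\tilde\mu\, t}\bigr)$ and unrolling gives $\mathbb{E}\|w^{T}-w^\ast\|^2 = \mathcal{O}(1/T)$, and $\tilde L$-smoothness then converts this into $\mathbb{E}[\mathcal{L}(w^{T})] - \mathcal{L}(w^\ast) = \mathcal{O}(1/T)$, where $T$ counts the total number of gradient updates, as claimed. If one instead models each \textbf{ClientUpdate} as solving \eqref{eq:fed_maml} exactly or $\epsilon$-inexactly, Stage~3 reduces to the classical inexact gradient-averaging (FedProx-style) argument, which yields the same $\mathcal{O}(1/T)$ rate.
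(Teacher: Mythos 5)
Your proposal follows essentially the same route as the paper's proof: use Corollary~1 of Finn~\etal~\cite{finn2019online} to transfer strong convexity (and smoothness) from $f$ to the per-task meta-objective and hence to each $\mathcal{L}_k$, then view Algorithm~\ref{algo:fed_maml} as FedAvg on a strongly convex objective and obtain the $\mathcal{O}(1/T)$ rate --- the paper simply cites Theorem~3 of Li~\etal~\cite{li2019convergence} for your Stage~3 instead of unrolling the recursion. Your Stage~2, bounding the bias and second moment of the stochastic meta-gradient, is additional rigor that the paper's two-line proof omits by invoking the FedAvg theorem as a black box, and it correctly identifies the one place where the reduction is not entirely automatic.
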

\begin{proof}
As $f_{\mathcal{T}}(w)$ is strongly-convex, 
Corollary 1, \cite{finn2019online} implies that the local FSL objective $\mathcal{L}_k$ in \eqref{eq:fed_def_adp_k} is also strongly-convex. By taking Algorithm~\ref{algo:fed_maml} as a FedAvg algorithm with a strongly-convex objective, Theorem 3, \cite{li2019convergence} implies that it converges at a rate of  $\mathcal{O}(\frac{1}{T})$ in which $T$ is total number of local gradient updates of all devices during training.
\end{proof}

Proposition~\ref{prop:1} shows that for a convex FSL objective,
e.g., the cross-entropy loss of a linear or logistic model with $L_2$-regularization, Algorithm~\ref{algo:fed_maml} converges.
For a broader family of non-convex models, such as deep neural networks, convergence of FedL is still an open research topic though 
% the local objective does not satisfy the strong convexity assumption. To provide extended theoretic convergence analysis to arbitrary local objective, 
some attempts have been made~\cite{smith2017cocoa, li2018federated}.
% assume that there exists some inexact solution which can be obtained with finite steps.
% that the dissimilarity between global and local optimal solution is bounded (Assumption 1,~\cite{li2018federated}), and the loss function with $L_2$ regularization is strongly convex, then a theoretic proof is available. 
%We will empirically study the convergence rate in \ref{sec:exp}.

We note that solving \eqref{eq:fed_maml} requires computing a gradient through $w'$, which is another function of gradient of $w$ as given by \eqref{eq:adp}, and thus requires computing the Hessian. Fortunately, fast Hessian-vector products~\cite{ghorbani2019investigation} are widely adopted to approximate the second-order information, which is equivalent of performing backward passes twice with SGD.
If we denote the number of parameters of the model as $W$, this results in $\mathcal{O}(W)$ computational time for one iteration of model update. Thus the total computational time of FedFSL-naive is $\mathcal{O}(W\cdot T)$. Note that Chen~\etal~\cite{chen2018fedmeta}  proposed similar federated meta-learning procedures for supervised learning tasks.

\section{Improving FedFSL with better coordination}

So far, we have provided FedFSL-naive as a straightforward way of performing distributed few-shot learning. However, one unresolved technical challenge is that meta-learning
depends on sampled episodes that contain only very few labeled data points. In data-scarce scenarios, even the data distribution over the clients could be the same, the high variance of the data 
may lead to quite distinct gradient descent directions, and thus
the trained few-shot models could become quite distinct over the clients. This results in model divergence in aggregation.
Similar observations were also found in FedL tasks with non-IID data \cite{zhao2018federated, li2018federated} but this problem has been amplified in the data-scarce scenarios we consider.

\begin{figure}
\begin{center}
\includegraphics[clip, trim=0 0 0 0, width=0.42\textwidth]{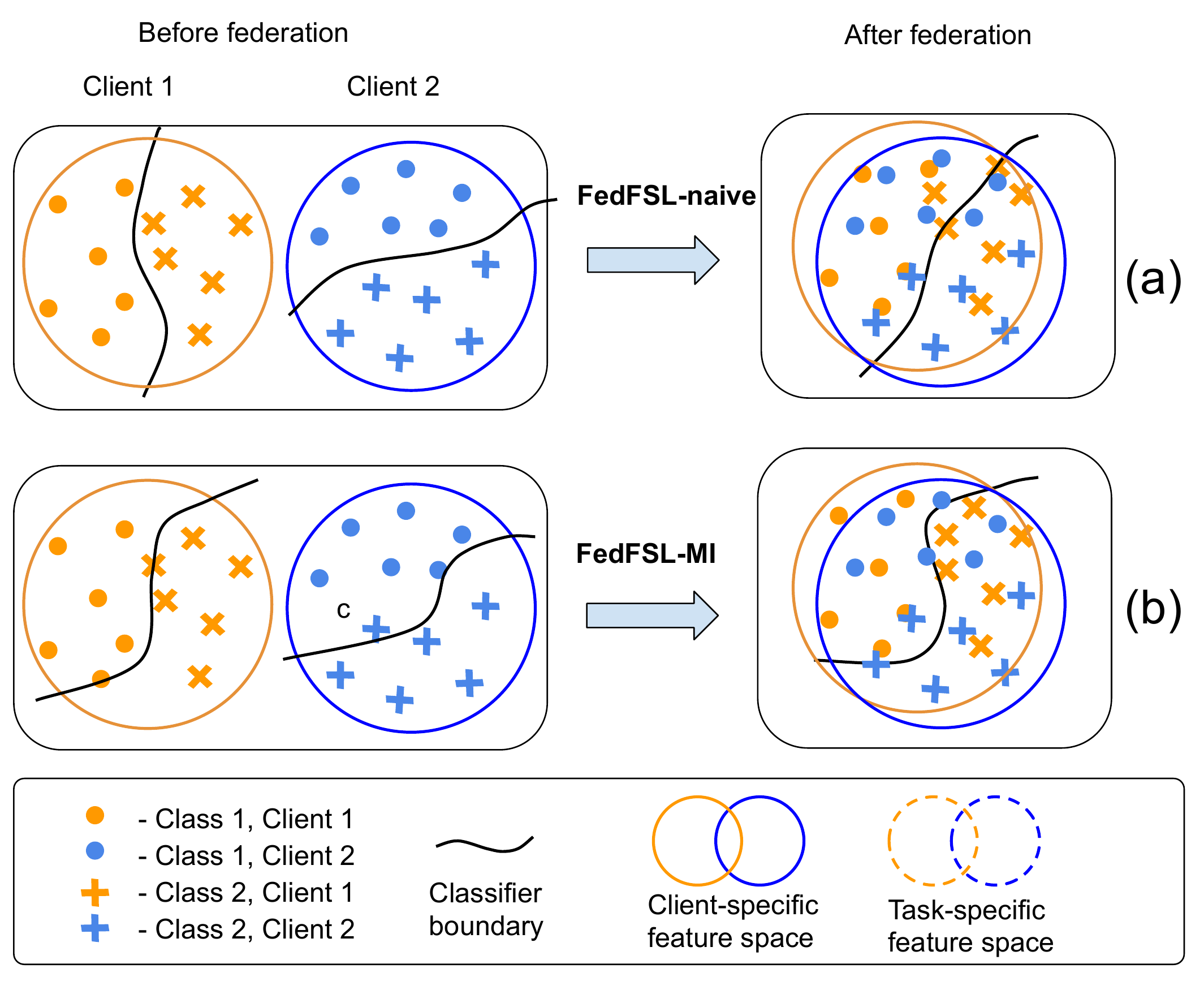}
\end{center}
\vspace{-5pt}
\caption{Illustration of decision boundaries learned by (a) FedFSL-naive and (b) FedFSL-MI in two-client case. }
\label{fig:demo_mcd}
\vspace{-10pt}
\end{figure}

In Fig.~\ref{fig:demo_mcd}(a), we illustrate a two-client case which follows FedFSL-naive scheme to learn models individually (left) first and average the models to obtain the federated decision boundary (right). However, the discrepancy between two client models makes them provide misaligned individual decision boundaries (left). Thus the aggregated central model provides less optimal federated decision boundary (right) with lots of misclassified data samples. 

In this section, we will discuss how to better coordinate client models with mutual information in \ref{sec:fed_maml_mi}, and we propose an adversarial learning procedures to further learn a discriminative feature space in \ref{sec:adv}.

\subsection{FedFSL with Mutual Information (MI)}
\label{sec:fed_maml_mi}
% We observe that recent improvement such as FedProx and FEDL propose to improve FedAvg by solving a surrogate function, with additional regularization to Recently, 

To better coordinate client models learned on distinct data sources,  we propose to regularize the local updates by minimizing the divergence between client models and the central model.
As the central model is shared with the clients at each round, it serves as an intermediate way of training the clients collaboratively without overfitting local data. 

\subsubsection{Mutual information (MI)}
Kullback-Leibler (KL) divergence is commonly used as a measure of the difference of two probability distributions. 
In a collection of multiple distributions, a summed pair-wise KL-divergence is used in
recent studies~\cite{belghazi2018mutual, zhang2018mutual} in ensemble learning to measure the total discrepancy of all those distributions, which we term it \textit{mutual information} (MI). 
In FedL, MI can be utilized to measure the discrepancy of all the participating client models. However, calculating the pairwise KL-divergence requires $\mathcal{O}(K^2)$ calculations which could impose heavy burdens to the central server.
We will propose a simplified MI proximal term and integrate it in FedFSL, and we target to minimize it for reducing internal discrepancy over client models. 

% Concretely, for client $k$ during $t$-th local update, let $p(w_k)$ and $p(w)$ be the probability output of client model $w_k$ and the global model $w^t$ respectively. 

\subsubsection{FedFSL-MI}
Formally, at $t$-th optimization round, we ask the central server to produce a k-exclusive global model $w^t_{\text{-}k}$ such that
\begin{equation}
\label{eq:fuse_mi}
w^{t}_{\text{-}k} =  \sum_{c=1,c\neq k}^K \frac{|\mathcal{B}_c|}{|\mathcal{B}_{\textit{-}k}|} \wsupsubeq{t}{c} \ ,
\end{equation} 
and send back to $k$-th client. The k-exclusive global model $w^{t}_{\text{-}k}$ is taken as an ensemble of all other client models except the $k$-th client. 

We now define the MI loss as the Kullback-Leibler (KL) divergence of probability outputs produced by the k-exclusive global model $w^{t}_{\text{-}k}$ and the client model $w_k$ over sampled tasks such that
\begin{equation}
\begin{split}
&\mathcal{L}_k^{MI}(w^t_{-k}, w_k) = \frac{1}{|\mathcal{B}_k|} \sum_{\Tcaleq_k} D_{KL} \left (p(w^t_{-k})~ ||~ p(w_k) \right ) \\
=&\frac{1}{|\mathcal{B}_k|} \sum_{\Tcaleq_k} (p(w^t_{-k}) \cdot \log p(w^t_{-k}) - p(w^t_{-k}) \cdot \log p(w_k)), \\
\end{split}
\label{eq:loss_kd}
\end{equation}
in which $p(\cdot)$ is the probability outputs of an FSL model. Given an $N$-way FSL task $\mathcal{T}_k$, $p(w)$ is the normalized $N$-way predictions over $N$ classes that sums to one. We aim to minimize MI in order to reduce the discrepancy.

By integrating MI into the original local FedFSL objective function \eqref{eq:fed_def_adp_k},
our new target is to jointly minimize the MI loss together with the local FSL task loss such that
% The client is updating its local model with a combined target of few-shot learning task objective \eqref{eq:fed_maml} and a weighted MI regularization target  \ref{eq:fuse_mi} such that
% By combining the FedFSL task in Eq~(\ref{eq:fed_maml}) and mutual information as regularization term, we obtain the objective
\begin{equation}
\begin{split}
&\wsupsubeq{*,t}{k}=\underset{w_k}{\text{min}} \ \mathcal{L}_k(w_k) + \gamma \mathcal{L}_k^{MI}(w^{t}_{\text{-}k}, w_k),
\end{split}
\label{eq:fed_maml_mi}
\end{equation}
in which the weight $\gamma>0$ can be searched by cross-validation. We call this new method \textbf{FedFSL-MI} (FedFSL with Mutual Information regularization).

As the k-exclusive global model is different from client to client, the central server needs to compute the global model $K$ times.
In practice, when $K$ is large (e.g., $\ge$10), we can conveniently reuse the aggregated central model $w^{t}$ as in \eqref{eq:fuse} to
approximate $w^{t}_{\text{-}k}$, leading to no additional computation cost. We will default to use $w^t$ to approximate $w^{t}_{\text{-}k}$ in our experiments to reduce computations in simulated mobile devices, and we will compare them in Sec.\ref{sec:ab_mi}.
% In two-client case, $w^{t}_{\text{-}k}$ can be obtained by directly exchanging models with each other.
% \hl{We summarize the procedures} in Algorithm~\ref{algo:fed_maml} with local objective \eqref{eq:fed_maml_mi}.

We illustrate the intuition of using MI in Figure~\ref{fig:demo_mcd}(b).  As we minimize the discrepancy among client models, we encourage the decision boundaries to be consistent across the clients. Thus the federated model could produce a better aligned decision boundary. In the empirical study, we will show that it leads to a significant improvement over FedFSL-naive. 
However, the decision boundaries could become complex due to the consistency constraint (Figure~\ref{fig:demo_mcd}(b) right), which we will discuss in next section.

\subsection{Improving feature space with adversarial learning}
\label{sec:adv}

One technical disadvantage of FedFSL-MI is that constraining the decision boundaries to be similar over clients would develop a complex classifier that overfits to training tasks.
However, the classes of testing data for FSL are different from the base classes of training data, which makes the complex decision boundary not useful to unseen tasks. This also presents a key difference between FSL and conventional supervised learning.

\subsubsection{Feature space}
We aim to improve the FedFSL-MI by learning a central model that can produce a better-aligned decision boundary for unseen tasks.
Recent studies of metric learning~\cite{schroff2015facenet, Gidaris_2018_CVPR} have shown that learning a good feature space is beneficial to various tasks as it provides good 
representations (also known as features or feature embeddings) for data samples. In an ideal feature space, samples of the same class or similar classes are close to each other, while samples of different classes are far away. For example, images of cats and tigers are close in feature space, while tigers and wolves could be far away due to their distinct visual features. 
% Similarly, words of close semantic meanings are usually closer in word embedding space~\cite{pennington2014glove}.

Researchers have also found that a representative feature space is a kind of transferable knowledge that can be used to learn unseen data samples. For example, pre-trained vision recognition models (i.e., ResNet)~\cite{sharif2014cnn, ren2015faster} and language models (i.e. BERT)~\cite{pennington2014glove, vaswani2017attention} can produce off-the-shelf image/language representations for various tasks. Few-shot classification, as we consider in this paper, will especially benefit from a discriminative and transferable feature space if such a feature space can be derived properly and efficiently in distributed scenarios. We will show it is feasible in next sections.
% as the decision boundary will be much easier to be learned with well represented data.

\subsubsection{Learn a consistent feature space}
To our best knowledge, how to learn a consistent and discriminative feature space with FedL has never been studied before.
The difficulty is how to construct a consistent feature space over many clients without sharing data.
% We will explore an efficient way of constructing such a feature space by explicitly optimizing for a discriminative feature generator for FedFSL models. 
Motivated by recent progress in Generative Adversarial Networks (GANs)~\cite{goodfellow2014generative, saito2018mcd}, we will decompose an FSL model as a feature generator and a classifier (i.e., discriminator) which can be optimized in an alternative and iterative fashion.
This new adversarial learning approach is named as \textbf{FedFSL-MI-Adv} (FedFSL with Mutual Information regularization and Adversarial learning).

% Notations of FedFSL-MI-Adv in details.
We first introduce some new notations to facilitate discussion.
Without loss of generality, a few-shot classification model can be represented as a feature generator $\Theta$ and a classifier $\theta$. For a given data sample $x$, we denote its generated feature as $f_{\Theta}(x)$. The output logits of the classification model is derived by applying the classifier on the feature such that $f_{\theta}\circ f_{\Theta}(x)$. Thus the predicted  probabilistic distribution over $N$ classes is denoted as
$p(\Theta, \theta)=\sigma (f_{\theta}\circ f_{\Theta}(x))$ in which $\sigma$ is the softmax function.  

In centralized training, the
feature generator and the classifier could be learned with supervised learning without many tricks. However, in distributed scenarios, we have to additionally consider aligning feature space learned with many clients. We propose a novel procedure that
alternatively trains the classifier and the generator as two opponents. We train the client model classifier to maximize the difference between its predictions and central model predictions, while train the client feature generator to minimize the difference. We will explain the details and intuitions.

\begin{figure}
\begin{center}
\includegraphics[clip, trim=0 0 0 0, width=0.42\textwidth]{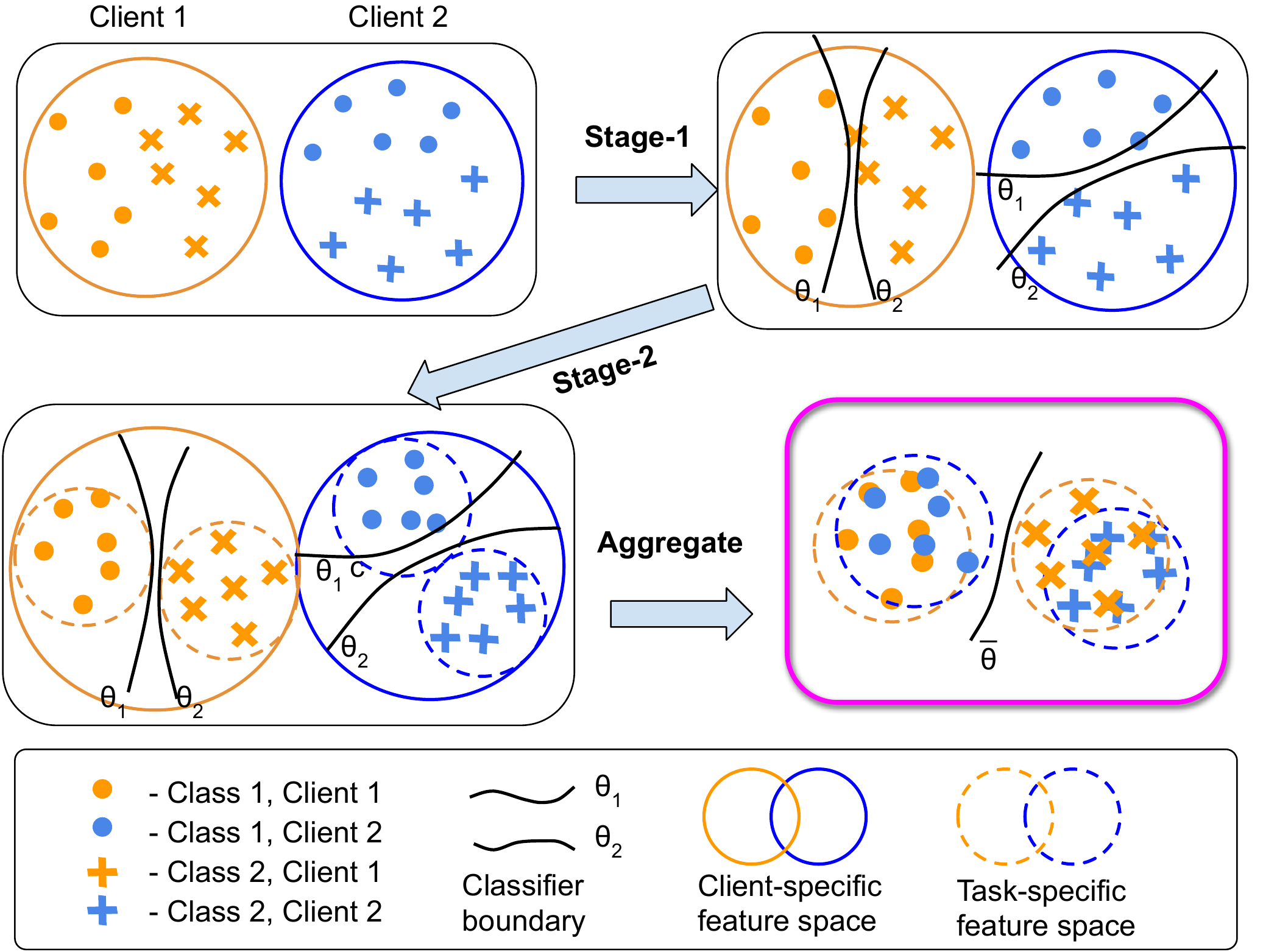}
\end{center}
\vspace{-5pt}
\caption{An example of federated decision boundaries learned by FedFSL-MI-adv with two-stage adversarial learning on two clients of different data distributions. }
\label{fig:demo_mcd_adv}
\vspace{-15pt}
\end{figure}

% We now propose an adversarial learning based approach to improve FedFSI-MI by explicitly optimizing task-specific decision boundaries across the clients. 
% In comparison, FedFSL-MI optimizes the client model as a whole to minimize the difference between client and central model, thus may lead to a over-trained classifier. However, a task-specific classifier is usually not useful to new tasks on novel classes.

\subsubsection{Adversarial learning procedure}
We design a two-stage adversarial learning procedure for the local update for explicitly learning a consistent feature space.
In overall, the $(t\text{-}1)$-th communication round ends up by aggregating the
% includes updating $k$-th client model which yields $w^t_k=\Ttsupsubeq{t}{k}$, and aggregating 
client models to a central model in \eqref{eq:fuse} and sending it back to clients as $w_k=\Ttsupsubeq{}{k}$. 
At the beginning of next round of local update, 
each client initializes a new classifier $\theta_k'$.
The feature generator $\Theta_k$ and two classifiers $\theta_k$ and $\theta_k'$ are all trainable and will involve in a two-stage adversarial training procedure as follows.

% We design a two-stage training process of our proposed method to alternatively update the classifiers $\theta_k$ and the generator $\Theta_k$ in adversarial fashion to build a discriminative feature generator.

\begin{itemize}[leftmargin=*]
    \item \textbf{Training stage-1} is to 
train two classifiers to produce \emph{distinct} decision boundaries, in the motivation of detecting ambiguous data samples in current feature space. Ambiguous samples are those lying near the decision boundaries that tend to be misclassified by two different classifiers, as shown in Fig.~\ref{fig:demo_mcd_adv}(stage-1). Intuitively, detecting those samples is the prerequisite of optimizing a feature space that resolves the ambiguity.
During this stage, the feature generator $\Theta_k$ remains fixed, while $\theta_k$ and $\theta_k'$ are updated.

We first define the adversarial loss to measure the difference of two classifiers $\theta_k$ and $\theta_k'$ by the KL divergence of their probabilistic outputs $p(\Theta, \theta)$ such that
\begin{equation}
\begin{split}
&\mathcal{L}_k^{adv}(\theta_k, \theta_k', \Theta_k) 
% = \sum_{\Tcaleq_k} \mathcal{L}^{kl}_{\Tcaleq} (f_{[\theta, \Theta]}, f_{[\theta_k, \Theta]}) \\
= \frac{1}{|\mathcal{B}_k|}\sum_{\Tcaleq_k} D_{KL} \left (p(\Theta_k, \theta_k)~ ||~ p(\Theta_k, \theta_k') \right ).
% =&\sum_{\Tcaleq_k} \mathrm{KL}\left( \sigma \left (\frac{f_{\theta_k^c} \circ f_{\Theta_k}}{T} \right),\sigma \left (\frac{f_{\theta_k} \circ f_{\Theta_k}}{T} \right ) \right ) \\
\end{split}
\label{eq:loss_adv}
\end{equation}
We simultaneously minimize the FSL local objective \eqref{eq:fed_def_adp_k} while \emph{maximize} the adversarial loss to encourage the disagreement of the two classifiers. Formally,
we define the objective of stage-1 as a combination of local task objective $\mathcal{L}_k$ \eqref{eq:fed_def_adp_k} and adversarial loss $\mathcal{L}_k^{adv}$ \eqref{eq:loss_adv} with weight $\eta>0$ such that
% \begin{equation}
% \begin{gathered}
% \underset{\theta_k,\theta_k'}{\text{min}}  \ \mathcal{L}_k^{st\text{-}1}(\theta_k, \theta_k'; \Theta_k) \\ 
% = \mathcal{L}_k (\theta_k; \Theta_k) +  \mathcal{L}_k (\theta_k'; \Theta_k) - \eta \mathcal{L}_k^{adv}(\theta_k, \theta_k' ; \Theta_k)
% \end{gathered}
% \label{eq:fed_mi_adv_1}
% \end{equation}
\begin{multline}
\underset{\theta_k,\theta_k'}{\text{min}}  \ \mathcal{L}_k^{st\text{-}1}(\theta_k, \theta_k'; \Theta_k) \\ 
= \mathcal{L}_k (\theta_k; \Theta_k) +  \mathcal{L}_k (\theta_k'; \Theta_k) - \eta \mathcal{L}_k^{adv}(\theta_k, \theta_k' ; \Theta_k).
\label{eq:fed_mi_adv_1}
\end{multline}

\item \textbf{Training stage-2} is to minimize adversarial loss for learning the discriminative feature generator.
In this stage, we fix the classifiers $\theta_k$ and $\theta_k'$ and train the generator $\Theta_k$ to minimize the discrepancy of the two classifiers measured by the adversarial loss.
The intuition is shown in Fig.~\ref{fig:demo_mcd_adv}(stage-2):
by \textit{minimizing} \eqref{eq:loss_adv},  the feature generator $\Theta$ is learning to push ambiguous data samples away from the decision boundaries, so that both classifiers could make the right predictions and their discrepancy gets reduced. As a result, the feature space (dashed circles) generated by $\Theta$ is trained to be discriminative which produces larger inter-class margins.

Formally, we define the objective of stage-2 as a combination of local task objective $\mathcal{L}_k$ and adversarial loss $\mathcal{L}_k^{adv}$ with weight $\lambda>0$ such that
\begin{multline}
\underset{\Theta_k}{\text{min}}  \ \mathcal{L}_k^{st\text{-}2}(\Theta_k; \theta_k, \theta_k') \\
= \mathcal{L}_k (\Theta_k; \theta_k) +  \mathcal{L}_k (\Theta_k; \theta_k') + \lambda \mathcal{L}_k^{adv}(\Theta_k ; \theta_k, \theta_k').
\label{eq:fed_mi_adv_2}
\end{multline}
\end{itemize}

%In practice we can also add entropy loss on $\theta$ to control its complexity.

% , as shown in top row of Figure~\ref{fig:two_stage}.
% to become as distinct as possible regarding their 
% Maximize discrepancy between global classifier and local classifier, while we also minimize cross-entropy loss as well as classifier entropy loss. In this stage, we fix feature generator $\Theta_k$ and update local classifier $\theta_k$, as shown in top of Figure~\ref{fig:pipe_mcd}.
% This time, we maximize the discrepancy to encourage classifiers to learn diverse task boundaries.

% in which $\lambda>0$ is the weight of the discrepancy loss which can be searched by cross-validation. 
By training the classifiers and the feature generator in an adversarial manner, we iteratively optimize the model to learn a discriminative feature generator which helps boost few-shot learning on unseen tasks.

In our toy example, the feature generator learned by cat and dog images is likely to distinguish these two categories by eyes and ears and other unique features of these two species. Thus, cat-like and dog-like images are projected with large margins in the feature space learned explicitly by our approach. This space is transferable to tasks such as classifying tiger and wolf images. In reality, the richer and more representative are the base classes, the more discriminative the feature space will be. 
Recently, centralized machine learning models~\cite{sharif2014cnn, vaswani2017attention} have been shown to be capable of learning generic and versatile feature spaces on complex structured data such as images and texts. We have shown that such a feature space can also be efficiently learned in distributed scenarios.

In conclusion, we have proposed a novel way of learning a discriminative feature space in FedFSL with an adversarial learning strategy.
We summarize FedFSL-MI-Adv in Algorithm~\ref{algo:fed_mi_adv}.
{\SetAlgoNoLine
\begin{algorithm}[htp]
\small
\DontPrintSemicolon
\LinesNumberedHidden
\KwIn{A set of $K$ federated clients. A local FSL objective $\mathcal{L}_k$ for each client $k$.
}
\KwOut{A global model $w=\Ttsupsubeq{}{}$ optimized for FSL task.}
\textbf{Server executes:} \;
\Indp Initialize global model $w^0=\Ttsupsubeq{0}{}$ \;
$t \leftarrow 1$ \;
\While{t $\leq$ maximum rounds $T$}{
    % $m \leftarrow \max(C\cdot K, 1)$ \;
    % $S_t \leftarrow $ (a random subset of $m$ clients) \;
    \For{\textup{each client} $k$ \textup{\textbf{in parallel}}}{ 
    $\Ttsupsubeq{t}{k} \leftarrow $ ClientUpdate$(\Ttsupsubeq{t}{})$
    }
    Clients send models $\Ttsupsubeq{t}{1...K}$ to server \;
    $\Ttsupsubeq{t+1}{} \leftarrow \sum_{k=1}^K \frac{|\mathcal{B}_k|}{|\mathcal{B}|} \Ttsupsubeq{t}{k}$ \;
    The server sends $\Ttsupsubeq{t+1}{}$ back to clients \;
    $t \leftarrow t+1$ \;
 }
Return $\Ttsupsubeq{t}{}$ \;
\;
\Indm \textbf{ClientUpdate}$(\Ttsupsubeq{}{})$: \;
% \Indp $\mathcal{B}_k \leftarrow$ (split local data into batches of episodes) \;

\Indp \KwIn{global model from previous round $\Ttsupsubeq{t}{}$}
\KwOut{updated local model $\Ttsupsubeq{t}{k}$}

Sample a batch of episodes $\mathcal{B}_k=\{\mathcal{T}_1,...,\mathcal{T}_n\}$ \;
$\Ttsupsubeq{}{k} \leftarrow \Ttsupsubeq{t}{}$ \;
Initialize a new classifier $\theta_k'$ \;
$\theta_k, \theta_k' \leftarrow $ Solve Eq.\eqref{eq:fed_mi_adv_1} with SGD \;
$\Theta_k \leftarrow $ Solve Eq.\eqref{eq:fed_mi_adv_2} with SGD \;
$\Ttsupsubeq{t}{k} \leftarrow \Ttsupsubeq{}{k}$ \;
Return $\Ttsupsubeq{t}{k}$ \;

% $\theta_k^c \leftarrow \theta_k^t$\;
% \For{\textup{each episode} $\mathcal{T} \in \mathcal{B}_k$}{
%     // \textit{adapt to new task as in \eqref{eq:adp}} \;
%     $\Ttsupsubeq{}{k} \leftarrow \Ttsupsubeq{}{k}-\alpha \gradsubeq{\Ttsupsubeq{}{k}}f_{\mathcal{T}^{(s)}}(\Ttsupsubeq{}{k})$  \;
%     $[\Theta_k, \theta_k'] \leftarrow [\Theta_k, \theta_k']-\alpha \gradsubeq{[\Theta_k, \theta_k']}f_{\mathcal{T}^{(s)}}([\Theta_k, \theta_k'])$ \;
%     // \textit{stage-1 optimization as in \eqref{eq:fed_mi_adv_1}} \;
%     $\theta_k, \theta_k' \leftarrow \underset{\theta_k,\theta_k'}{\text{min}}  \ \mathcal{L}_k^{st\text{-}1}(\theta_k, \theta_k' ; \Theta_k)$ \;
%     // \textit{stage-2 optimization as in \eqref{eq:fed_mi_adv_2}} \;
%     $\Theta_k \leftarrow \underset{\Theta_k}{\text{min}} \ \mathcal{L}_k^{st\text{-}2}(\Theta_k; \theta_k, \theta_k')$
% }
\caption{FedFSL-MI-Adv algorithm.} \label{algo:fed_mi_adv}
%\end{algorithm2e}
\end{algorithm}
}

% {\SetAlgoNoLine
% \begin{algorithm}[htpb]
% \DontPrintSemicolon
% \LinesNumberedHidden
% \KwIn{A set of $K$ federated clients. A local FSL objective $\mathcal{L}_k$ for each client $k$.
% }
% \KwOut{A global model $\wsupsubeq{}{}$ optimized for FSL task.}
% \textbf{Server executes:} \;
% \Indp Initialize global model $w^0$ \;
% $t \leftarrow 1$ \;
% % Randomly initialize $\wsupsubeq{}{k}$ for all clients. \;
% %\For{\textup{each round} $t = 1,2,\dots,T$ } {
% \While{not done}{
%     \For{\textup{each client} $k$ \textup{\textbf{in parallel}}}{ 
%     $w_k^{t} \leftarrow $ ClientUpdate{$(k,w)$} 
%     }
%     Collect $w_k^{t}$ from all clients \;
%     $w \leftarrow \sum_{k=1}^K \frac{|\mathcal{B}_k|}{|\mathcal{B}|} w^{t}_k$  \tcp*[l]{model avg}
%     Send $w$ back to clients \;
%     $t \leftarrow t+1$
%  }
% \;
% \Indm \textbf{ClientUpdate}$(k,w)$: \;
% \Indp $\mathcal{B}_k \leftarrow$ (split local data into batches of episodes) \;
% \For{\textup{each episode} $\mathcal{T}_k \in \mathcal{B}_k$}{
%     $\wsupsubeq{}{} \leftarrow \wsupsubeq{}{}-\eta  \gradsubeq{\wsupsubeq{}{}}f_{\mathcal{T}^{(s)}_k}(w)$ \tcp*[l]{Eq.\eqref{eq:adp}}
%     % \LeftComment{// \textit{optimize local objective} $\mathcal{L}_k$} \;
%     $w \leftarrow \underset{\wsupsubeq{}{}}{\text{argmin}} \ \mathcal{L}_k(w)$  \tcp*[l]{Eq.\eqref{eq:fed_def_adp_k}}
% }
% % return $w$ to server \;
% \caption{FedFSL-naive framework.} \label{algo:fed_maml}
% %\end{algorithm2e}
% \end{algorithm}
% }

\section{Experiments and Discussions}
\label{sec:exp}
We first provide details of the model architecture, parameter settings, and datasets that we use in the experiments.
Then we visualize the decision boundaries of our approaches with a toy example. We then demonstrate the performance of our proposed algorithms with two typical few-shot classification tasks -- 5-way 1-shot and 5-way 5-shot -- on three benchmark datasets which cover machine vision and NLP tasks. We will make in-depth discussions.

\subsection{Model architecture} 
We utilize a 12-layer deep neural network (DNN) as our base model (adopted from ResNet-12~\cite{he2016deep} which is commonly used for image classification tasks~\cite{sun2019mtl, oreshkin2018tadam, zhang2018metagan}).
It consists of the feature extractor with 12 convolutional layers and about 5 million parameters in total, as well as two fully-connected layers with a ReLU nonlinearity as the classifier. We update model parameters by Adam solver~\cite{kingma2014adam} with a fixed learning rate $10^{-3}$. We
set the adaptation step size $\alpha=0.01$ in \eqref{eq:adp}, the mutual information weight $\gamma=0.2$, and the stage-1/-2 discrepancy loss weight $\eta=\lambda=0.1$ in \eqref{eq:fed_mi_adv_1} and \eqref{eq:fed_mi_adv_2}.
%We sample 200 episodes for each round of local update.

% We utilize a small 4-layer deep neural network (DNN) as our base model which is commonly used for image classification tasks~\cite{sun2019mtl, oreshkin2018tadam, zhang2018metagan, finn2017model}.
% It consists of the feature extractor $\Theta$ with 4 convolutional layers and about 0.3 million parameters in total, and two fully-connected layers with a ReLU nonlinearity as the classifier $\theta$.

%\subsection{Parameter settings} 
%We simulate the FedL environment and implement DNNs with PyTorch~\cite{paszke2017automatic}.

\subsection{Datasets} 
We briefly describe three benchmark datasets that are commonly used in studying FSL and FedL~\cite{finn2017model,munkhdalai2017meta,li2017meta,sun2019mtl,chen2018fedmeta}.
\begin{itemize}[leftmargin=*]
\item \textbf{\mini}~\cite{vinyals2016matching} is based on a small portion of the full ImageNet images~\cite{deng2009imagenet}.
It has 100 classes of images split to 64/16/20 as train/val/test sets. Each class has 600 images with a resolution of 84 $\times$ 84. 
\item \textbf{FC100}~\cite{oreshkin2018tadam} is based on CIFAR-100 images that has 100 classes split to 60/20/20 as train/val/test splits. Each class has 600 images with a low resolution 32$\times$32. It is more challenging because of the low image quality.
\item \textbf{Sent140}~\cite{caldas2018leaf} is a benchmark federated learning  dataset for 2-way sentiment classification (positive and negative). We sampled from this dataset 10,000 annotated tweets provided by 310 twitter users and split them to train/val/test sets with provided tools. Each tweet has 1-20 English words. We tokenize the sentences and keep only common words which have GloVe representations~\cite{pennington2014glove}. 
\end{itemize}

\subsection{MNIST Example}

We provide a simple example on MNIST digit dataset in Fig.~(\ref{fig:mnist_example}), to visualize and compare the decision boundaries of FedFSL-MI and FedFSL-MI-Adv.
We consider the 5-way 1-shot FSL task here: train a digit classification model on data of digits 0-4, and test its few-shot classification capability on digits 5-9 by observing just one labeled sample per class.
To better visualize the results, we manipulate the feature generator to produce a 2-dim feature for each input digit sample. 
% During testing, we sample 100 batches of digital data of classes of 5-9, each with 1 image for adaptation and 10 images for query per category.

In Fig.~\ref{fig:mnist_example}, 
we plot the testing data samples from digit class 5-9 by projecting their features produced by the feature generators. We also depict the decision boundaries of the classifiers. Data samples of different classes are with different colors. 
We observe that FedFSL-MI-Adv (left) produces more distinguishable decision boundaries than FedFSL-MI (right) as expected. The two algorithms achieve a few-shot classification accuracy of 87.5\% and 83.6\%, respectively.
The least accurate class recognized by FedFSL-MI-Adv is digit '9' (purple) of 73\% correctness rate, with 18\% misclassified as '6' (orange); while for FedFSL-MI is digit '6' (orange) of 64\% correctness rate, with 23\% misclassified as '8' (red). 
This indicates that our adversarial learning approach proposed in Section~\ref{sec:adv} boosts the FedFSL task by constructing a more discriminative and transferable feature space for FSL.

\begin{figure}[t]
\begin{center}
\includegraphics[clip, trim=15 0 0 0, width=0.40\textwidth]{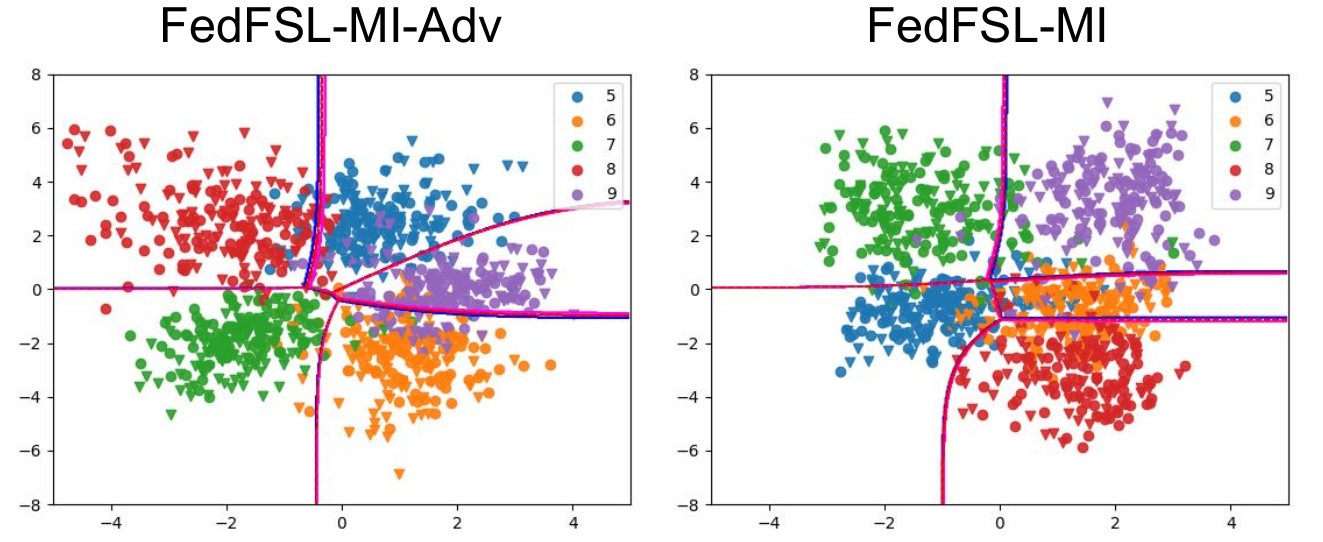}
\end{center}
\vspace{-10pt}
 \caption{Visualization of decision boundaries of FedFSL-KD and FedFSL-KD-Adv at different epochs.}
\label{fig:mnist_example}
\vspace{-10pt}
\end{figure}

\subsection{Results on benchmark datasets}
We experiment with our proposed three methods (FedFSL-naive, -MI, -MI-Adv) and two additional baselines (FSL-local and FedFSL-prox) for comparison.
\begin{itemize} [leftmargin=*]
\item FSL-local is a non-distributed baseline of training an individual FSL model for each client on local data and averaging their results on the shared testing tasks. 
\item FedFSL-prox is a variant of FedFSL-naive by adding a weight regularization term as FedProx~\cite{li2018federated} in objective.
\end{itemize}
We partition the data samples in IID and non-IID ways. For IID partition, data samples of each class are uniformly distributed to each client. To perform non-IID partition, we follow~\cite{hsu2019measuring,yu2020salvaging} by dividing data samples to all clients class-by-class with Dirichlet distribution of concentration parameter $\alpha = 1.0$. In Fig.\ref{fig:data_nid}, we show an example of such a partition of 64 training classes of miniImageNet on a randomly chosen client, when total device number is 2 to 30.

%\begin{figure}[hbt]
%\begin{center}
%\includegraphics[clip, trim=10 0 10 10, width=0.25\textwidth]{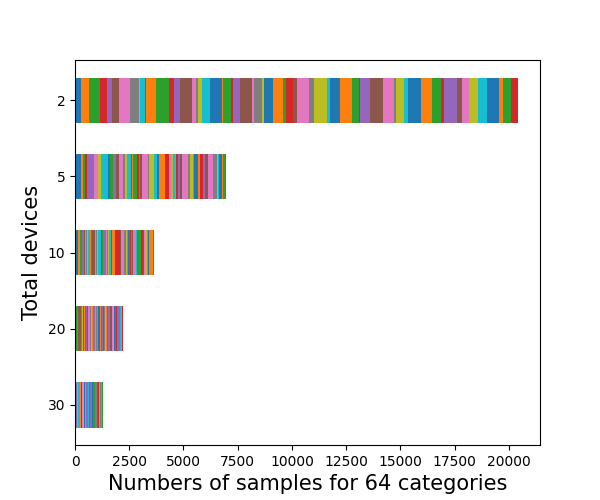}
%\end{center}
%\vspace{-4pt}
%\caption{An example of Non-IID data allocated to 2 to 20 federated devices with Dirichlet distribution $\alpha=1.0$.}
%\vspace{-8pt}
%\label{fig:data_nid}
%\end{figure}

\makeatletter\def\@captype{figure}\makeatother
\hspace{-15pt}
\begin{minipage}{0.22\textwidth}
\centering
\includegraphics[scale=0.28, trim=10 20 0 0]{figures/data_dist_device_1.png_train.png}
\vspace{-13pt}
\caption{Non-IID data.}
\label{fig:data_nid}
\end{minipage}
\makeatletter\def\@captype{table}\makeatother
\hspace{-5pt}
\begin{minipage}{.26\textwidth}
\centering
\small
\vspace{10pt}
\begin{tabular}{lcc} \toprule  
\multirow{2}{*}{\textbf{Method}} & \multicolumn{2}{c}{\textbf{Non-IID}} \\ 
\cmidrule{2-3}     & 1-shot  & 5-shot \\ \toprule
FedFSL-local & 59.70\% & 66.68\% 	\\
FedFSL-naive &  68.85\% & 70.62\%	\\
FedFSL-prox &  70.77\% & 72.25\%	\\
FedFSL-MI  &  70.37\% &  73.25\% \\
FedFSL-MI-Adv  & \textbf{71.35\%} & \textbf{76.00\%}   \\  \midrule \bottomrule
\end{tabular}
\vspace{-2pt}
\caption{Sent140 results.}
\label{tab:res_sent140}
\end{minipage}

\begin{table}[htb]
\caption{Results on benchmark datasets.}
\small
\begin{subtable}{\linewidth}
{
\begin{center}
\begin{tabular}{lcccc} \toprule  
\multirow{2}{*}{\textbf{Method}} & \multicolumn{2}{c}{\textbf{IID}} & \multicolumn{2}{c}{\textbf{Non-IID}} \\ 
\cmidrule{2-3} \cmidrule{4-5}    & 1-shot   & 5-shot  & 1-shot   & 5-shot \\ \toprule
FSL-local  & 50.83\%	&  67.47\% & 48.08\% & 63.25\%	 \\
FedFSL-naive~\cite{chen2018fedmeta} & 53.00\%	 &  67.63\%	& 49.95\% & 66.11\% \\
FedFSL-prox~\cite{li2018federated} &  53.03\%	 &  69.05\%	& 50.08\% & 68.53\% \\ \midrule
FedFSL-MI (ours)  & 54.98\% & 69.07\% & 51.07\% & 68.57\% \\
FedFSL-MI-Adv (ours)  & \textbf{56.42\%}	&	\textbf{70.92\%}  & \textbf{53.69\%}& \textbf{69.61\%}    \\  \midrule \bottomrule
\end{tabular}
\end{center}
}
\vspace{-2pt}
\caption{MiniImageNet results.}
\end{subtable}

\begin{subtable}{\linewidth}
{\begin{center}
\begin{tabular}{lcccc} \toprule  
\multirow{2}{*}{\textbf{Method}} & \multicolumn{2}{c}{\textbf{IID}} & \multicolumn{2}{c}{\textbf{Non-IID}} \\ 
\cmidrule{2-3} \cmidrule{4-5}    & 1-shot   & 5-shot  & 1-shot   & 5-shot   \\ \toprule
FSL-local &  36.45\% 	&  47.68\% & 35.90\% & 52.93\%	   \\
FedFSL-naive~\cite{chen2018fedmeta} & 38.42\%	 & 49.97\%	 & 36.11\% & 51.58\%	\\
FedFSL-prox~\cite{li2018federated} & 37.62\% & 48.99\%	& 36.95\% & 53.02\% \\ \midrule
FedFSL-MI (ours)  & 39.78\%	& 50.65\% & 38.08\% & 52.98\% \\
FedFSL-MI-Adv (ours)  & \textbf{40.22\%}	& \textbf{51.18\%}	 & \textbf{38.51\%} & \textbf{54.43\%}       \\ \midrule\bottomrule
\end{tabular}
\end{center}}
\vspace{-2pt}
\caption{FC100 results.}
\vspace{-10pt}
\end{subtable}

\label{tab:res_iid}
\end{table}

\subsubsection{Results on Image Classification}
In Table~\ref{tab:res_iid}, we compare our methods on miniImageNet (a) and FC100 (b) with 1-shot / 5-shot tasks learned by a federation of 10 clients. 
We observe that
\begin{itemize}[leftmargin=*]
\item \emph{FedFSL-MI-Adv outperforms others}. For both IID and non-IID case, FedFSL-MI-Adv consistently outperforms others on both 1-shot and 5-shot tasks for both datasets. 
\item For 1-shot and 5-shot IID task on miniImageNet, FedFSL-MI-Adv achieves the best accuracy of 56.42\% and 70.92\%, which outperforms the second best FedFSL-MI by more than 2.6\% and 2.7\% respectively and relatively. A similar trend has also been observed for FC100.
%: for 1-shot and 5-shot tasks,  FedFSL-MI-Adv achieves the best accuracy of 38.01\% and 48.26\%, outperforming the second best FedFSL-MI by more than 7.8\% and 5.1\%, respectively.
\item For non-IID task, FedFSL-MI-Adv outperforms the second best FedFSL-MI by more than 5\% and 3.1\% in 1-shot case on miniImageNet and FC100 datasets respectively, and outperforms FedFSL-naive by more than 7.2\% and 5.5\% respectively. This indicates that our designed modules indeed help achieve a better federated model especially for non-IID case.
%\item Using distributed data always outperforms the non-distributed method FSL-local. The FSL capacity gains further by enforcing model consistency such as FedFSL-MI, and adding feature learning such as FedFSL-MI-Adv.
\item For FC100 5-shot task,  FedFSL-MI-Adv performs better on non-IID partitions than IID partitions, with an accuracy 54.43\% (non-IID) v.s. 51.18\% (IID) shown by last line in Table~\ref{tab:res_iid}(b). One explanation is that non-IID partitions force each client model to learn on distinct local tasks where certain data classes get sampled more times and thus get represented well. As FedFSL-MI-Adv further aligns the feature spaces of all clients, we could derive a more representative joint feature space with the global model.
% \item FedFSL-MI-Adv consistently outperforms FedFSL-MI, by margins of more than 5.8\% and 5\% for 1-shot and 5-shot tasks on miniImageNet and even more on FC100. This shows again that the adversarial learning strategy boosts the FedFSL tasks significantly.
\item Using FedProx~\cite{li2018federated} performs no better than our FedFSL-MI. This is because FedProx directly constrains client model weights to be closer to global model, while our FedFSL-MI softly optimizes the model outputs of them to be closer, which makes the training end-to-end and easier to optimize.

% FSL-local performed  non-distributed and does not jointly train a central model with all data sources. FedFSL-MI-Adv surpasses FSL-local by 12.5\% (46.82\% v.s. 41.64\%) relatively for 1-shot task, and 16.6\% (61.17\% v.s. 52.4\%) relatively for 5-shot task on miniImageNet, and that relative advantages are 15\% and 21.9\% on FC100. This result reinforces our motivation that the study of distributed FSL is imperative and beneficial especially for mobile computing scenarios with scarce and distributed data sources.
% \item FedFSL-MI and FedFSL-prox achieved comparable results for both 1-shot and 5-shot tasks on both benchmark datasets. This could be explained by that both approaches have proximal term to regularize  client models with their probabilistic outputs \eqref{eq:fed_maml_mi} or model weights to be closer to the global model.
\end{itemize}

\begin{figure}[t]
\centering
\begin{subfigure}{.46\textwidth}
\includegraphics[clip, trim=0 0 20 0, width=1.0\textwidth]{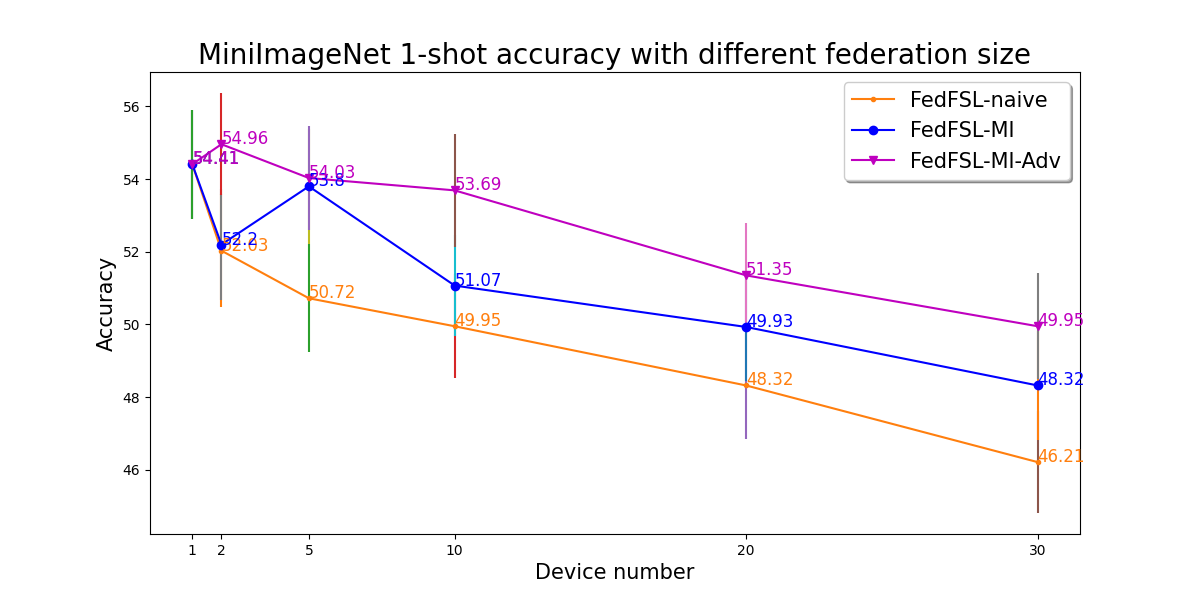}
\end{subfigure}
\vspace{-8pt}
\caption{FSL accuracy of 1-shot task on MiniImageNet w.r.t. number of devices in federation.}
\vspace{-8pt}
\label{fig:device_number}
\end{figure}

%\makeatletter\def\@captype{figure}\makeatother
%\begin{minipage}{0.22\textwidth}
%\centering
%\hspace{-30pt}
%\includegraphics[scale=0.28, trim=0 0 20 0]{figures/data_dist_device_1.png_train.png}
%\vspace{-2pt}
%\caption{Non-IID data.}
%\label{fig:data_nid}
%\end{minipage}
%\makeatletter\def\@captype{table}\makeatother
%\begin{minipage}{.26\textwidth}
%\centering
%\small
%\begin{tabular}{lcc} \toprule  
%\multirow{2}{*}{\textbf{Method}} & \multicolumn{2}{c}{\textbf{Non-IID}} \\ 
%\cmidrule{2-3}     & 1-shot  & 5-shot \\ \toprule
%FedFSL-local &  66.68\% & \%	\\
%FedFSL-naive &  68.85\% & 70.62\%	\\
%FedFSL-naive &  70.77\% & 72.25\%	\\
%FedFSL-MI  &  70.37\% &  73.25\% \\
%FedFSL-MI-Adv  & \textbf{71.35\%} & \textbf{ 76.00\%}   \\  \midrule \bottomrule
%\end{tabular}
%\vspace{-2pt}
%\caption{Sent140 results.}
%\label{tab:res_sent140}
%\end{minipage}

\subsubsection{Results on Text Classification}
In Table~\ref{tab:res_sent140}, we compare our methods on Sent140 dataset with 1-shot / 5-shot tasks learned by a federation of 5 clients. 
Following the provided tool of partitioning the dataset, we distribute different users' data to each client without replacement. Since the data distributions vary for users, this sampling process provides non-IID data partitions. Our goal is to train an effective global sentiment classification model on one portion of users, which can be used to detect the sentiment on disjoint new users. This is particularly challenging because different users can use very distinct words and exclamations to express feelings, and many of them are rare.

In this task, our backbone model is a GRU (RNN) network with hidden size 128. We convert tweet sentences to sequences of 300-D GloVe~\cite{pennington2014glove} word vectors as input to the GRU model. We add a binary classifier upon GRU hidden output to classify negative and positive sentiment.
We examine the performance of baselines and our models and observe that
\begin{itemize}[leftmargin=*]
\item FedFSL-MI-Adv outperforms the other approaches in this natural language understanding task, similar as in image classification task. It also shows that our FedFSL framework can be applicable to both CNN and RNN models.
\item We found that the performance increases from 1-shot to 5-shot tasks are generally less than image classification tasks, e.g., less than 5\% on Sent140 while more than 10\% on miniImageNet. This is because the few-shot labelled sentences can only provide a few more words to help adapt to a user's emotion, while images can provide much richer details and patterns of a given object.
\end{itemize}

\subsubsection{Different device number}
We study the trend of accuracy of FedFSL with different number of participating devices $K=2,5,10,20,30$, as well as $K=1$ to simulate complete centralized training.
We illustrate the results for non-IID 5-way 1-shot task with miniImageNet  in Fig.~\ref{fig:device_number} with detailed numbers.
Note that the more participating devices, the fewer training samples each device holds.
% as we uniformly partition the data. For miniImageNet, each data class has 600 samples in total. With different $K$, there are $\lfloor \frac{600}{K} \rfloor$ images per class for each device, which ranges
%from 20 (when $K=30$) to 300 (when $K=2$).  
We observe that
\begin{itemize}[leftmargin=*]
\item The overall trend is that more participating devices yielded decreased accuracy for all 3 approaches. The task becomes more difficult when $K$ increases as the device coordination grows harder and the client model becomes less capable with less training data.
\item \emph{FedFSL-MI-Adv still achieves the best results} on all cases, leading the second best FedFSL-MI by more 2-5\% relatively.
\item The performance of \emph{FedFSL-MI-Adv decreases more slowly than other approaches} with the increase of $K$, which indicates the beneficial of learning a consistent feature space over the clients.
%\item \emph{Centralized training does not significantly outperforms distributed training.} 
\item \emph{FedFSL-MI-Adv in 2-device federation works even better than 1-device centralized training}, with accuracy 54.96\% v.s. 54.41\%. Note that the total training samples of each device get halved on each device when $K=2$. The surprising result that distributed training outperforms centralized training can be explained that FSL is aiming to learn with very few training samples, instead of fitting a task with many samples as in supervised learning. Therefore, FSL is relative less sensitive to the number of examples in \emph{base} classes on each client. Moreover, by utilizing our approach to align decision boundaries well, the two client models form an effective ensemble to enhance the overall performance, compared with a single-model case.
\end{itemize}

\subsubsection{Mutual information with $k$-exclusive global model}
\label{sec:ab_mi}
In Section~\ref{sec:fed_maml_mi}, we discussed that we can estimate the mutual information produced by $w_{-k}^t$ with the global model $w^t$ when the number of devices are large. We conducted experiments to compare using the original $w_{-k}^t$ with $w^t$ in (\ref{eq:fed_maml_mi}).
Surprisingly, we found that using $w^t$ either outperforms or ties with using $w_{-k}^t$ as original definition in both 1-/5-shot tasks in all datasets, with the only exception of miniImageNet 5-shot task (69.61\% v.s. 70.70\%). 
We found that this phenomenon could be explained by``self-knowledge distillation" such that a trained model can be used to improve itself~\cite{yun2020regularizing, zhang2020selfdistillation} as it provides a kind of label smoothing which can make training processes robust to noises. 

\section{Conclusion}
In this paper, we proposed a framework that makes federated learning effective in data-scarce scenarios. 
We designed an adversarial learning strategy to 
construct a consistent feature space over the clients, to better learn from scarce data.
Experimental results show that our adversarial learning based method outperforms baseline methods by 5\%$\sim$15\% on benchmark datasets.
Future work can investigate the
theoretical convergence analysis of FedFSL with non-convex models and how to further extend FedFSL to regression and reinforcement learning tasks.

\bibliographystyle{ACM-Reference-Format}
\bibliography{ref}

\end{document}